\colorlet{blue}{black}
\def\w{{\bf w}}
\def\x{{\bf x}}
\def\X{{\bf X}}
\def\v{{\bf v}}
\def\s{{\bf s}}
\def\1{{\bf{1}}}
\def\0{{\bf{0}}}
\def\X{{\bf X}}
\def\D{\mathcal{D}}
\def\AUC{{\sc \bf AUC}}
\def\BAUC{{\sc \bf BAUC}}
\def\H{\mathcal{H}}
\def\ERR{{\sc \bf ERR}}
\def\bepsilon{\bm{\epsilon}}
\def\1{{\bf{1}}}
\def\0{{\bf{0}}}
\def\w{{\bf w}}
\def\v{{\bf v}}
\def\x{{\bf x}}
\def\g{{\bf g}}
\def\bc{\textcolor{blue}}
\newcommand{\R}{\mathbb{R}}
\newcommand{\E}{\mathbb{E}}
\DeclareMathOperator*{\argmin}{arg\,min}
\newcounter{thm_counter}
\newtheorem{theorem}[thm_counter]{Theorem}
\begin{document}

\title{A Robust AUC Maximization Framework with Simultaneous Outlier Detection and Feature Selection for Positive-Unlabeled Classification}

\author[1]{Ke~Ren$^*$}
\author[1]{Haichuan~Yang\thanks{Equal contribution.}}
\author[1]{Yu~Zhao}
\author[2]{Mingshan Xue}
\author[3]{Hongyu Miao}
\author[4]{Shuai~Huang}
\author[1]{Ji~Liu}

\affil[1]{\small Department of Computer Science and Department of Electrical and Computer Engineering, University of Rochester, Rochester, NY 14627, USA}
\affil[2]{\small Baylor College of Medicine, Houston, TX 77030, USA}
\affil[3]{\small University of Texas Health Science Center at Houston, Houston, TX 77030, USA}
\affil[4]{\small Department of Industry and Systems Engineering, University of Washington, Seattle, WA 98195, USA}

\date{\today}

\maketitle

\begin{abstract}
The positive-unlabeled (PU) classification is a common scenario in real-world applications such as healthcare, text classification, and bioinformatics, in which we only observe a few samples labeled as ``positive'' together with a large volume of ``unlabeled'' samples that may contain both positive and negative samples. Building robust classifier for the PU problem is very challenging, especially for complex data where the negative samples overwhelm and mislabeled samples or corrupted features exist. To address these three issues, we propose a robust learning framework that unifies AUC maximization (a robust metric for biased labels), outlier detection (for excluding wrong labels), and feature selection (for excluding corrupted features). The generalization error bounds are provided for the proposed model that give valuable insight into the theoretical performance of the method and lead to useful practical guidance, e.g., to train a model, we find that the included unlabeled samples are sufficient as long as the sample size is comparable to the number of positive samples in the training process. Empirical comparisons and two real-world applications on surgical site infection (SSI) and EEG seizure detection are also conducted to show the effectiveness of the proposed model.
\end{abstract}

\section{Introduction}
{T}{he} positive-unlabeled (PU) classification is quite common in many real-world applications such as healthcare \citep{kaur2006empirical, peng2009healthcare}, text classification \citep{icml2002partially, kdd2002PEBL}, time series classification \citep{ijcai2011positive}, and bioinformatics \citep{bioinf2012positive}. The PU classification is defined as: given a few samples labeled as ``positive'' and a high volume of ``unlabeled'' samples that contain both negative and positive samples, a binary classifier is learned from it. Existing works for PU classification \citep{geo2007a, geo2011a, icml2003leaning, sdm2009positive, bioinf2012positive, elkan2008learning} work well for well-conditioned data. However, some important issues remain unsolved for information-rich but complex data such that the performance is often unsatisfactory. In particular, we consider the following three issues for complex data:
\begin{itemize} %[noitemsep,topsep=0pt,leftmargin=*]
\item The data is highly biased -- negative samples dominate. \bc{For example, in the abnormal event detection, e.g., earthquake detection, seizure detection (considered in our experiment), there are no more than 0.1\% of samples are positive. In this case, a dummy classifier classifying all data to be negative achieves prediction accuracy $99.9\%$. As a result, the commonly used prediction error is no longer a robust and stable objective or metric.}
\item The labeled positive samples includes contain incorrect labels. \bc{It happens often in practice when the labeled is provided by humans. Although the percentage of samples with wrong labels (we called outliers in this paper) could be small, such outliers could seriously influence the performance.}
\item \bc{Samples are with redundant or irrelevant features. Redundant features could seriously hurt the performance due to causing overfitting issue, especially when the number of samples are very limited.}
\end{itemize}

\bc{Although each single issue has been considered in existing literature \cite{du2014analysis,hodge2004survey,zhang2009adaptive}, a naive step-by-step solution compiling existing methods does not work well, since any one issue can affect another two. For example, if we first apply a feature selection algorithm, followed by an outlier detection algorithm, the selected features in the first step could be totally wrong because it does not know there may exist outliers; vise versa. Therefore, only when all issues are considered simultaneously, it is possible to find out correct features and outliers. Unfortunately, little is known about how these three issues can be handled simultaneously, which makes the PU problem on complex data more challenging. It motivates us to find out an integral solution to solve all three issues together.}

To address these three issues jointly on PU classification, we propose a robust learning framework that unifies AUC (area under the curve) maximization -- a robust metric for biased labels, outlier detection (for excluding wrong labels and bad samples), and feature selection (for excluding corrupted features). Firstly, existing works for PU classification \citep{geo2007a, geo2011a, icml2003leaning, sdm2009positive, bioinf2012positive, elkan2008learning} mainly use the misclassification error or the recall value as the performance metric to guide the training of the classifier. However, \bc{under the settings of the PU learning, highly biased training set, AUC serves as a more robust metric  \citep{cortes2004auc} since it is invariant to the percentage of positive samples. Although AUC optimization has been studied before, i.e. \citep{rakotomamonjy2004optimizing,brefeld2005auc}, it cannot be directly adopted the AUC metric to the PU problem. This is because negative samples are unavailable in the PU's scenario. To overcome this difficulty, }  we propose to use a blind AUC (BAUC) criterion to approximate the target AUC, and we will show that in theory maximizing BAUC is equivalent to maximizing AUC. Secondly, as the ``unlabeled'' samples are actually a heterogeneous collection of samples that contain outliers, the learning formulation needs to automatically identify the outliers and leave them out of the training process of the classifier. As a matter of fact, as the name ``unlabeled'' suggested, positive samples with wrong labels or positive samples with corrupted feature values, are likely to happen and contribute to the ``outlier'' category of the unlabeled samples. Thirdly, as feature selection has been a critical aspect for mitigating the overfitting issue, we also need to ensure that our learning formulation is able to incorporate this functionality. 

The main contributions in this paper are summarized in the following:
\begin{itemize} %[noitemsep,topsep=0pt,leftmargin=*]
    \item The proposed $\BAUC$-OF model is more robust than existing classification error minimization frameworks (for example, \cite{plessis2015convex}), particularly in these two aspects: 1) there is no need to set a prior value for the percentage of positive samples $\pi$ in the training process, which has been a difficulty in many applications; and 2) both outlier detection and feature selection are integrated with the AUC maximization formulation.
    \item The generalization error bounds are also provided for the proposed model. It gives valuable insights into the theoretical performance of the method and reveals relationships between some important parameters (such as the dimensionality of the features, the samples sizes for both positive samples and ``unlabeled'' samples) of the PU problem with the performance of the learned classifier. Those insights also lead to useful guidance for practices such as that the unlabeled samples are sufficient as long as the sample size is comparable to the number of positive samples in the training process.
    \item Empirical experimental studies have been conducted on a thorough collection of datasets that demonstrate the proposed method outperforms the state-of-the-art approaches. 
    \item Last but not least, it is worthy of mentioning that the proposed outlier detection and feature selection technique can be easily extended to other formulations. That means, although our proposed $\BAUC$-OF model is motivated for general PU problems, the proposed outlier detection and feature selection can also be integrated with other PU learning formulations that have been developed for specific PU problems. To the best of our knowledge, this is the first work to simultaneously select features and identify outliers.
\end{itemize}

\section{Related Works} 
%\section{RELATED WORKS}
\label{rel-work}
%The proposed study is related to a few areas including the PU problem, outlier detection, and feature selection. In what follows we provide brief review of each topic. 
This section reviews related works about PU problem, outlier detection, and feature selection.
%\vspace{-2mm}
%\paragraph{Learning From PU Data} 

{\noindent \bf Learning From PU Data} 
PU learning is mainly about learning a binary classifier from a dataset containing both positive and unlabeled data \citep{elkan2008learning}. \bc{The labeled positive data is assumed to be selected randomly from the population. The traditional approach to solving the PU learning is to simply treat all unlabeled data as negative samples, which may result in biased solutions. To mitigate this bias, several methods are proposed.  One-class classification \citep{moya1996network} only uses the positive samples in the training set. These works include \cite{de2007kernel,manevitz2001one}. }  \bc{ Further some non-convex loss functions are introduced to mitigate this bias, for example \citep{smola2009relative,du2014analysis}. Finally \citep{plessis2015convex} propose the convex formulation that can still cancel this bias for PU problem.}  In some applications where the class prior is known, the PU learning can be reduced to solving a cost-sensitive classification problem \citep{elkan2001foundations}. Works focused on adjusting the weights inside the loss functions according to the class prior are also studied in \cite{scott2009novelty,blanchard2010semi,li2003learning,icml2003leaning}. 
However, an inaccurate estimation of the class prior will increase the classification error. Thus some research efforts in the PU problem also investigated different ways to estimate the class prior such as \citep{elkan2008learning, blanchard2010semi, du2014class}. Other approaches proposed  also include graph-based approach \cite{pelckmans2009transductively}, bootstrap-based approach \cite{mordelet2014bagging}. We omit the details here. \bc{In this paper, we propose an AUC-based PU learning framework where the AUC metric is used to guide the learning process. We show this robust metric is especially suitable for PU learning and can be integrated with outlier detection and feature selection to achieve better performance than state-of-art PU learning approaches.}

{\noindent \bf Outlier Detection} 
To the best of our knowledge, the outlier issue has not been considered in PU classification. Thus, here we provide a brief overview of outlier detection methods in general settings. \bc{Existing outlier detection researches can be roughly categorized as variance-based approaches and model-based approaches. The variance-based approaches detect the outliers through a set of criterion used to measure the difference between the data and the rest of the dataset. These criterion can come from statistical analysis \citep{yamanishi2001discovering,yamanishi2004line}, distance metric \citep{knorr1999finding} and density ratio \citep{aggarwal2001outlier,jiang2001two,breunig2000lof}. They usually work well in low dimensional space when the data amount is not too huge. But they are difficult to be integrated into other models like PU learning. The model-based approaches detect outliers through certain models. Typical methods include the regularized principal component regression \citep{walczak1995outlier}, regularized partial least square \citep{hubert2003robust}, SVM \citep{jordaan2004robust}  based algorithms and others. More detailed reviews can be found in \cite{hodge2004survey}.  None of these works were integrated with PU learning models.}

%\vspace{-1mm}
%\paragraph{Feature Selection / Overfitting} 
{\noindent \bf Feature Selection / Overfitting} 
Feature selection is very important in machine learning especially for applications where the number of features is much larger than the number of data points. While feature selection has a wide spectrum of approaches, \bc{for example \citep{he2006laplacian,yao2017lle}}, here, the sparse learning area is more related to our study. Generally, there are two types of sparse learning approaches. One type of approaches include the convex relaxation formulations, represented by the  $\ell_1$ norm based approaches such as the LASSO formulation developed in \citep{tibshirani1996regression} and further extended in \citep{ng2004feature, zou2006the, friedman2008sparse}. The other approach is the non-convex formulation, represented by the $\ell_0$ norm based (or greedy) approaches such as OMP \citep{tropp2004greed}, FoBa \citep{zhang2009adaptive, liu2013forward}, and projected gradient based methods \citep{yuan2013gradient, nguyen2014linear}. Usually, the $\ell_1$ based methods are convex relaxations of the $\ell_0$ norm based methods. Theoretical studies have also been conducted to compare the performance of the $\ell_0$ norm based approaches with $\ell_1$ based methods such as \citep{zhang2009adaptive, liu2013forward}. \bc{ The feature selection framework is also extended to the multi-class settings  \citep{obozinski2006multi,chapelle2008multi,xu2017feature}.}

\section{Blind AUC Formulation with Outlier Detection and Feature Selection} \label{Sec:model}

We first propose the blind $\AUC$ ($\BAUC$) metric for the PU problem and show its connection with the AUC for binary classification problem in Section~\ref{Sec:model:BAUC}. Then, Section~\ref{sec:model:BAUCOF} introduces the proposed formulation that unifies the BAUC maximization with simultaneous outlier detection and feature selection. \bc{The outlier detection and the feature selection are further integrated with other PU formulations.} In Section~\ref{sec:model:opt}, we develop the optimization algorithm for solving the proposed model.

%\subsection{Objective function based on AUC for PU problem} 
\subsection{Blind AUC (BAUC)}
\label{Sec:model:BAUC}

 The $\AUC$ \citep{rad1982the, qua2002areas} metric is defined as
\begin{align*}
\AUC(f) = \E_{\x_+\sim \D_+} \E_{\x_- \sim \D_-} \1 (f(\x_+) \geq f(\x_-)),
\end{align*} 
where $f$ is a scoring function, for example, $f(\x) = \w^\top \x$. $\D_+$ and $\D_-$ denote the distributions for positive samples and negative samples respectively. Indicator function $\1(\cdot)$ returns value $1$ if the condition is satisfied; $0$ otherwise. Intuitively, $\AUC(f)$ measures the probability that the scoring of $\x_+$ is greater than $\x_-$ if $\x_+$ and $\x_-$ are randomly sampled from the positive class and the negative class. It has been known that $\AUC$ is a more stable and robust metric than accuracy for biased binary classification problem. Thus, it has been widely used to guide the training of the classification model $f$ in binary classification problem.

Although the maximization of AUC is our ultimate goal in the PU problem, it is hard to directly apply it because the negative labels are not available in the PU problem. Therefore, we consider a blind AUC (BAUC) for the PU problem. In particular, BAUC simply \emph{blindly} treats all unlabeled samples as negative samples and defines $\BAUC$ in the following
\begin{align*}
\BAUC(f) = \E_{\x_+\sim \D_+} \E_{\x \sim \D} \1 (f(\x_+) \geq f(\x)).%\BAUC(f) = \E_{\x_+\sim \D_+} \E_{\x \sim \D} \1 (f(\x_+) \geq f(\x)).
\end{align*}
where $\D$ is the distribution for unlabeled samples.
Thus, using the $\BAUC$, we can derive the following empirical formulation to learn the classifier $f$ from the positive training set $\X_+$ and unlabeled training set $\X$:
\begin{align}
\widehat{\BAUC}(f):= {1\over |\X| |\X_+|}\sum_{\x\in \X, \x_+\in \X_+} \1 (f(\x_+) \geq f(\x)). %\sum_{\substack{\x\in \X \\ \x_+\in \X_+}} \1 (f(\x_+) \geq f(\x)).
\label{eq:BAUC:e}
\end{align}
It is easy to verify that
$$
\E[ \widehat{\BAUC}(f)] = {\BAUC}(f) %\E_{\x_+\sim \D_+, \x\sim \D} \widehat{\BAUC}(f) = {\BAUC}(f)
$$ \bc{from the fact that  the expected value of the sum of random variables is equal to the sum of their individual expected values. In the proof of Theorem~\ref{thm:VC}, we will show the details.}

Note that, one can approximately maximize \eqref{eq:BAUC:e} by replacing the indicator function $\1(\cdot)$ by a surrogate function, e.g, hinge loss or logistic loss. 

Although we can only empirically maximize $\BAUC$, the following Theorem ~\ref{thm:AUCandBAUC} actually suggests that maximizing $\BAUC$ essentially maximizes $\AUC$ (recall that it is our ultimate goal). Particularly, Theorem~\ref{thm:AUCandBAUC} reveals that $\AUC$ depends on $\BAUC$ linearly, which indicates that when $\BAUC$ achieves the maximum, $\AUC$ achieves its maximum too.

\begin{theorem} \label{thm:AUCandBAUC}
For binary classification problem, given an arbitrary scoring function $f$, there exists a linear dependence between its $\AUC(f)$ value and its $\BAUC(f)$ value:
\begin{equation*}
%\label{auc-rel}
\AUC(f) = (1-\pi)^{-1}(\BAUC(f)-{\pi /2}),
\end{equation*}
where $\pi$ is the percentage of positive samples.
\end{theorem}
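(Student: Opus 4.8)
The plan is to exploit the fact that the unlabeled distribution is a mixture of the two class-conditional distributions. Since the labeled positives are drawn at random from the population and a fraction $\pi$ of the population is positive, we may write $\D = \pi\,\D_+ + (1-\pi)\,\D_-$. Substituting this decomposition into the definition of $\BAUC(f)$ and using linearity of expectation, the inner expectation $\E_{\x\sim\D}\1(f(\x_+)\ge f(\x))$ splits as $\pi\,\E_{\x\sim\D_+}\1(f(\x_+)\ge f(\x)) + (1-\pi)\,\E_{\x\sim\D_-}\1(f(\x_+)\ge f(\x))$. Taking the outer expectation over $\x_+\sim\D_+$ then yields
\begin{equation*}
\BAUC(f) = \pi\,\E_{\x_+,\x\sim\D_+}\1(f(\x_+)\ge f(\x)) + (1-\pi)\,\AUC(f),
\end{equation*}
where the second term is exactly $\AUC(f)$ by definition.

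Next I would evaluate the self-comparison term $q := \E_{\x_+,\x\sim\D_+}\1(f(\x_+)\ge f(\x))$, in which $\x_+$ and $\x$ are i.i.d.\ draws from $\D_+$. Exchanging the roles of the two i.i.d.\ variables gives $\E\1(f(\x_+)\ge f(\x)) = \E\1(f(\x)\ge f(\x_+))$; adding these two identities yields $2q = 1 + \P(f(\x_+)=f(\x))$. Under the standard assumption that the induced score distribution is continuous (no ties, i.e.\ $\P(f(\x_+)=f(\x))=0$), this forces $q = 1/2$. Plugging this back in gives $\BAUC(f) = \pi/2 + (1-\pi)\AUC(f)$, and solving for $\AUC(f)$ produces the claimed identity $\AUC(f) = (1-\pi)^{-1}(\BAUC(f)-\pi/2)$, valid whenever $\pi<1$.

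The one genuinely delicate point — and the step I would be most careful about — is the treatment of ties in the self-comparison term: if $f$ takes a common value on a set of positive probability (e.g.\ a thresholded or discrete score), then $q = (1+\P(\text{tie}))/2 > 1/2$ and the identity acquires a correction term. The cleanest remedy is to adopt the usual tie-breaking convention (counting ties with weight $\tfrac12$) consistently in the definitions of both $\AUC$ and $\BAUC$, which restores the exact linear relation; alternatively one simply states the continuity-of-scores assumption up front. Everything else — the mixture decomposition of $\D$ and the symmetry argument — is routine, so the proof is quite short once these conventions are fixed.
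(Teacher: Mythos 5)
Your proof is correct and follows essentially the same route as the paper: decompose $\D = \pi\,\D_+ + (1-\pi)\,\D_-$, split the expectation so that $\BAUC(f) = \pi q + (1-\pi)\AUC(f)$, and argue the positive-vs-positive self-comparison term $q$ equals $1/2$. Your explicit symmetry argument for $q$ and the caveat about ties (where the paper simply asserts $q=1/2$) is in fact more careful than the paper's justification, but it does not change the underlying approach.
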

\begin{proof} 
From the definition of $\BAUC$, we have
\begin{align*}
\BAUC(f) &= \E_{\x_+\sim \D_+} \E_{\x \sim \D} \1 (f(\x_+) \geq f(\x))\\
&= \E_{\x_+\sim \D_+} \E_{{\x_- \sim \D_-},{\x'_+ \sim \D_+}} \1 (f(\x'_+) \geq f(\x_-))\\
&= (1-\pi)\E_{\x_+\sim \D_+} \E_{{\x_- \sim \D_-}} \1 (f(\x_+) \geq f(\x_-)) + \pi\E_{\x'_+\sim \D} \E_{\x_+ \sim \D} \1 (f(\x'_+) \geq f(\x_+)).
\label{auc-exp}
\end{align*} 
The term $\E_{\x'_+\sim \D} \E_{\x_+ \sim \D} \1 (f(\x'_+) \geq f(\x_+))$ is a constant, because the probability that a randomly chosen positive sample is ranked higher than another randomly chosen positive sample from one same data set should always be $1/2$. So we have : 
\begin{equation*}
%\label{auc-rel}
\BAUC(f)= (1 - \pi)\AUC(f) + \pi /2.
\end{equation*}
which proves the theorem.
\end{proof}

\subsection{Integration of $\BAUC$ Maximization with Outlier Detection and Feature Selection} \label{sec:model:BAUCOF}

The maximization of $\widehat{\BAUC}(f)$ is equivalent to minimizing $1-\widehat{\BAUC}(f)$:
$$
\min_{f} 1-\widehat{\BAUC}(f) = {1 \over |\X||\X_+|} \sum_{\x\in \X, \x_+\in \X_+} \1 (f(\x_+) < f(\x)).
$$

While the concept of outlier has been diverse, in this paper, we mainly consider the outliers that include: 1) the samples that are wrongly labeled as positive; 2) the samples in the positive samples whose feature values are corrupted for whatever reasons, whose existence will distort the distribution of the data points. To identify those outliers, we construct a vector $\bepsilon \in \R^{|\X_+|}$ while each positive sample $\x_+$ corresponds to a coordinate of $\bepsilon$, denoted by $\bepsilon_{\x_+}$. With this notation, the following optimization formulation can be derived:
\begin{align}
%\nonumber 
\min_{f, \bepsilon}\quad & {1 \over |\X||\X_+|} \sum_{\x\in \X, \x_+\in \X_+} \1 (f(\x_+) + \bepsilon_{\x_+}< f(\x))
\\
\text{s.t.}\quad & \|\bepsilon\|_0 \leq t.
\label{eq:BAUC:1}
\end{align}
The key motivation behind this formulation is to use $\bepsilon_{\x_+}$ to adjust the score of the outlier $\x_{+}$ instead of modifying its label feature values though it actually has the equivalent effect. The constraint in \eqref{eq:BAUC:1} is to restrict the maximal number of outliers by a user-defined parameter $t$, and the nonzero elements of the optimal $\bepsilon$ indicate outliers. To the best of our knowledge, this is the first time to apply the $\ell_0$ norm for outlier detection while it has been used for the feature selection purpose before.

Next we integrate the $\BAUC$ model with feature selection capacity. The basic task is to restrict the hypothesis space for $f$. Here, we restrict our interest on linear scoring forms for $f$, that is, $f(\x) = \w^\top \x$ where $\w$ parameterizes the scoring function $f$. It is worthy of mentioning that our proposed framework could be extended to nonlinear models as well. Here, we mainly consider three types of sparsity hypothesis ($\H$) for $\w$ by defining $\H$ as
\begin{subequations}
\begin{align}
\label{eq:Hs:def}
\H_s = & \{\w\in \R^p~|~ \|\w\|_0 \leq s\}, \\
\label{eq:HGs:def}
\H_{\mathcal{G}, s} = & \left\{\w\in \R^p~\Big|~ \sum_{G\in \mathcal{G}}\1(\w_{G}\neq 0) \leq s\right\}, \\
\label{eq:HGS:def}
\H_{\mathcal{G}, \s} = & \{\w\in \R^p~|~\|\w_{G}\|\leq \s_G~\forall~G\in \mathcal{G}\},
\end{align}
\label{eq:H}
\end{subequations}
where $\mathcal{G}$ is the set of disjoint group index sets, $s$ is a scalar that specifies the upper bound of the feature size, and $\s \in \R^{|\mathcal{G}|}$ is a vector. $\H_s$ is the commonly used sparsity hypothesis space in sparse learning \citep{tibshirani1996regression}; $\H_{\mathcal{G}, s}$ denotes the group sparsity set \citep{huang2010the, zhang2010automatic}; and $\H_{\mathcal{G},\s}$ is the exclusive sparsity set enforcing the selection diversity \citep{campbell2015within}.

To put everything together, the final model can be summarized in the following:

\begin{align*}
%\nonumber 
\min_{\w, \bepsilon}\quad & {1 \over |\X||\X_+|} \sum_{\x\in \X, \x_+\in \X_+} \1 (\w^\top \x_+ + \bepsilon_{\x_+}< \w^\top \x)
\\
%\nonumber 
\text{s.t.}\quad & \|\bepsilon\|_0 \leq t, \quad \w \in \H.
%\label{eq:BAUC:2}
\end{align*}

Since the indicator function $\1(\cdot)$ (or equivalently the $0-1$ loss function) is not continuous, the common treatment is to use convex and continuous surrogate function to approximate it, such as the hinge loss and logistic loss function. Without loss of generality, here, we focus on the logistic loss (similar algorithms and theories can be applied to other smooth loss functions). This leads to the following $\BAUC$ formulation with Outlier detection and Feature Selection (named as $\BAUC$-OF):
%\begin{equation}
\begin{align}
%\nonumber
\min_{\w,\bm{\epsilon}} \quad & %F(\w, \bepsilon) :=
\frac{\alpha}{2}\| \bm{\epsilon} \|^2+\frac{\beta}{2}\| \w \|^2 + {1\over |\X_+||\X|}\sum_{\x_+\in \X_+, \x \in \X} h(\x_+, \x; \w, \bepsilon_{\x_+}) %\nonumber %{1\over |\X_+||\X|}& \sum_{\substack{\x_+\in \X_+ \\\x \in \X}}\log(1+\exp(-\w^T(\x_+-\x)-\bm{\epsilon}_{\x_+}))\\ 
\\
\text{s.t.}\quad & \| {\bm{\epsilon}} \|_0 \leq t, \w \in \H, 
\label{auc-smo}
\end{align}

%\end{equation}
where $h(\x_+, \x; \w, \bepsilon_{\x_+})$ is defined as 
%\[
%\textcolor{blue}{{1\over |\X_+||\X|} \sum_{\substack{\x_+\in \X_+ \\\x \in \X}}}\footnote{{\yutext remove the blue part?}}
$\log(1+\exp(-\w^{\top}(\x_+-\x)-\bm{\epsilon}_{\x_+})).$
Note that the two additional terms $\frac{\alpha}{2}\|\bm{\epsilon}\|^2$ and $\frac{\beta}{2}\|\w\|^2$ serve as the regularization term to cope with the possibility that $\bepsilon$ or $\w$ diverges in the optimization process. $\alpha$ and $\beta$ are usually set to be small values.

The proposed outlier detection and feature selection scheme is quite flexible and can easily incorporate with other existing PU frameworks.
Most existing models for PU problem consider minimizing the misclassification error. It is of interest to compare our proposed $\BAUC$-based models with these $\AUC$-based methods. Particularly, as we also integrate the $\BAUC$ model with outlier detection and feature selection, in this section, we further illustrate how the counterpart of $\BAUC$-OF can be developed in the framework of $\AUC$-based framework using a recent development  \citep{plessis2015convex}. Particularly, following \citep{plessis2015convex}, the error minimization is given as:
\begin{equation}
\label{error-obj}
\begin{aligned}
\ERR(\w) = & \pi\mathbb{E}_{\x_+ \in \X_+}\1(\w^\top \x_+ + b \leq 0) + (1-\pi)\mathbb{E}_{\x_- \in \X_-} \1(-\w^\top \x_--b \leq 0).
\end{aligned}
\end{equation}
Applying the logistic loss function to approximate the indication function $\1(\cdot \leq 0)$, \eqref{error-obj} can be written as \citep{plessis2015convex}:
\begin{equation*}
\label{error-out}
\begin{array}{l}
\begin{aligned}
\min_{\w,b}:\quad f(\w, b) := &\frac{\pi}{|X_+|}\sum_{\x_+\in \X^+}(\w^\top \x_++b)\\ 
&+\frac{1}{|X_+|}\sum_{\x_+\in \X^+}\log[1+\exp(b-\w^\top \x_+)]\\
&+\frac{1}{|X|}\sum_{\x \in \X}\log[1+\exp(-b-\w^\top \x)].
 \end{aligned}
\end{array} 
\end{equation*}
We then introduce the outlier detection and feature selection to this model to obtain the error minimization formulation with outlier detection and feature selection (named as $\ERR$-OF) in below:
\begin{align}
\nonumber
\min_{\w,b,\bm{\epsilon}}:\quad &\frac{\beta}{2}\|\w\|^2+\frac{\alpha}{2}\|\bm{\epsilon}\|^2 + % f(\w, b)
\frac{\pi}{|X_+|}\sum_{\x_+\in \X^+}(\w^\top \x_++b-{\bepsilon_{\x_+}})\\ 
\nonumber
&+\frac{1}{|X_+|}\sum_{\x_+\in \X^+}\log[1+\exp(\bm{\epsilon}_{\x_+}+b-\w^\top \x_+)]\\
\nonumber
&+\frac{1}{|X|}\sum_{\x \in \X}\log[1+\exp(b-\w^\top \x)]\\
\text{s.t.} \quad & \|\bm{\epsilon}\|_0 \leq t, \quad \w \in \H.
\label{error-out}
 \end{align}

\subsection{Optimization} \label{sec:model:opt}
This section introduces the optimization algorithm to solve the proposed $\BAUC$-OF formulation in \eqref{auc-smo}. Eq.~\eqref{auc-smo} is a constrained smooth nonconvex optimization. The nonconvexity is due to the constraints for $\w$ and $\bepsilon$. A natural idea is to apply the commonly used projected gradient descent algorithm to solve it. However, the AUC formulation involves a huge number of interactive terms between positive samples and the unlabeled samples. To reduce the complexity of each iteration, we use the stochastic gradient to approximate the exact gradient. In particular, we iteratively sample $\x$ from $\X$ to calculate the unbiased stochastic gradient:
\begin{align*}
\g_{\w}:= &{1\over |X_+|} \sum_{\x_+ \in \X_+} \nabla_{\w} h(\x_+, \x; \w, \bepsilon_{\x_+}) + \beta \w,
\\
\g_{\bepsilon}:= &{1\over |X_+|} \sum_{\x_+ \in \X_+} \nabla_{\bepsilon} h(\x_+, \x; \w, \bepsilon_{\x_+}) + \alpha \bepsilon.
\end{align*}
and apply the projected gradient step to update the next iteration
\begin{align*}
\w^{(k+1)} = & \argmin_{\w\in \H} \|\w - (\w^{(k)} - \eta\g_{\w^{(k)}})\|, 
\\
\bepsilon^{(k+1)} = & \argmin_{\|\bepsilon \|_0 \leq t} \|\bepsilon - (\bepsilon^{(k)} - \eta\g_{\bepsilon^{(k)}})\|,
\end{align*}
where $\eta$ is the learning rate, and the projection steps for $\w$ and $\bepsilon$ have closed form solutions.

While the convergence of PSG for convex optimization has been well studied - the generic convergence rate is $O(1/\sqrt{K})$, its convergence for nonconvex optimization has rarely been studied until very recently. Thanks for the method developed in \citep{nguyen2014linear}, one can follow their method to establish the convergence rate of PSG for \eqref{auc-smo}. 
Omitting tedious statements and proofs, we simply state the results in below: under some mild conditions, PSG converges to a ball of the optimal solution to \eqref{auc-smo}:
\begin{equation*}
\label{theo2}
\mathbb{E}\|\v^{(k)}-\v^*\|_2\leq\kappa^{k}\|\v^{(0)}-\v^*\|_2+\frac{\sigma_s + \sigma^*}{1-\kappa},
\end{equation*}
where $\v = [\w, \bepsilon]^\top$ and $\v^*$ are the optimal solution to \eqref{auc-smo}. $\kappa$ is a number smaller than $1$. It depends on the restricted condition number of the objective of \eqref{auc-smo}. The radius of the ball depends on two terms: $\sigma_s$ and $\sigma^*$. $\sigma_s$ is the variance due to the use of the ``stochastic'' gradient, while $\sigma^*$ is the observation noise while collecting the data.

{\noindent \bf Computational time analysis}
\bc{We discuss the computational time of the proposed algorithm. The AUC optimization is essentially a ranking based algorithm. For such kind of method, the computation complexity increases through pairing the data. So the computational time is highly dependent on the data size. Fortunately, from the proposed Theorem \ref{thm:VC}, we can see that when the number of positive labeled data (i.e. $n_{+}$) is fixed the marginal gain by including more unlabeled data (i.e. increase $n$) is very minor. Since the dataset in PU learning is highly biased, i.e., $n_{+}$ is small, it indicates that we can still achieve a good result by only using a moderate amount of data. So we suggest a useful way for deciding the sample size of the unlabeled set in practice, e.g., when the number of unlabeled samples is substantially more than the number of positive samples, it is not necessary to include all the unlabeled samples in early training stages to avoid heavy computational burden.  In practice, one can gradually increase the size of unlabeled samples until $\widehat{\AUC}$ does not change significantly. 
}

\section{Theoretical Guarantee}
\label{theory}

This section will study the theoretical performances of the proposed model and algorithm.

\begin{theorem} \label{thm:VC}
Given two datasets $\X_+$ and $\X$, let $n_+=|\X_+|$ and $n=|\X|$, and assume that all data points in $\X_+$ are i.i.d samples from the distribution $\D_+$, and all data points in $\X$ are i.i.d samples from the distribution $\D$, with probability at least $1-\delta$ we have:
\begin{equation*}
\begin{aligned}
&\widehat{\AUC}(\w) - \AUC(\w)\le O\left(\frac{1}{{1 - \pi }}\left(\sqrt {\frac{\phi_{n}}{n}}
+ \sqrt {\frac{\phi_{n_+}}{{n_+}}} \right)\right),
\end{aligned}
\end{equation*}
where \begin{equation*}
\widehat{\AUC}=(1-\pi)^{-1}(\widehat{\BAUC}-\pi /2),
\end{equation*}
and $\phi_{n'}$ is defined as 
\begin{align*}
&{s\log (pn'/s) + \log \frac{1}{\delta }} \quad (\H = \H_s), 
\\
&{s\max_{G\in \mathcal{G}} |G|\log {n'\over s\max_{G\in \mathcal{G}}|G|} + s \log |\mathcal{G}| + \log \frac{1}{\delta }} \quad (\H = \H_{\mathcal{G}, s}),
\\ 
&\left(\sum_{G\in \mathcal{G}}\s_G\right) \log {n' \over \sum_{G\in \mathcal{G}}\s_G} + |\mathcal{G}|\max_{G\in \mathcal{G}} \s_G\log |G|+\log \frac{1}{\delta} \quad (\H = \H_{\mathcal{G}, \s}),
\end{align*}
where $p$ is the total number of features.
%for any $\w\in \H$, $\w$ is the model parameter, $\mathcal{H}:=\{\w\in\mathbb{R}^p|\parallel \w\parallel_0\leq s\}$, $p$ is the number of features, $s$ represents the sparsity of $\w$, $\pi$ is the percentage of positive samples in whole data set
\end{theorem}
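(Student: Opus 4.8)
The plan is to establish a uniform convergence bound for the empirical $\widehat{\BAUC}$ around its expectation $\BAUC$ over the relevant hypothesis class, and then to transfer it to $\widehat{\AUC}$ via the exact linear relationship $\AUC(\w) = (1-\pi)^{-1}(\BAUC(\w) - \pi/2)$ from Theorem~\ref{thm:AUCandBAUC} together with the analogous identity defining $\widehat{\AUC}$. The factor $(1-\pi)^{-1}$ in the stated bound is exactly what this rescaling produces, so the real content is bounding $\BAUC(\w) - \widehat{\BAUC}(\w)$ uniformly over $\w \in \H$ with the added flexibility of the outlier vector $\bepsilon$. Since $\widehat{\BAUC}$ is an average over all pairs $(\x_+,\x)$, it is not a sum of independent terms; I would handle this with the standard two-sample $U$-statistic / bipartite ranking argument: symmetrize separately over the $n_+$ positive samples and the $n$ unlabeled samples, so that the deviation splits (up to constants) into a term controlled by a Rademacher/covering complexity over the unlabeled sample of size $n$ and a term controlled over the positive sample of size $n_+$, yielding the additive form $\sqrt{\phi_n/n} + \sqrt{\phi_{n_+}/n_+}$.

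First I would write $\widehat{\BAUC}(f) - \BAUC(f)$ as a two-sample empirical process indexed by the loss class $\{(\x_+,\x)\mapsto \1(f(\x_+)\ge f(\x))\}$ and invoke a Hoeffding-type decomposition for two-sample $U$-statistics (or, more elementarily, condition on one sample at a time and apply a bounded-differences inequality), so that with probability $1-\delta$ the deviation is at most a sum of two one-sample empirical-process suprema plus $\sqrt{\log(1/\delta)/n} + \sqrt{\log(1/\delta)/n_+}$ tail terms. Next I would bound each supremum by a covering-number / VC-type argument. The key geometric observation is that the effective hypothesis is the half-space threshold rule determined by the linear functional $\w^\top(\x_+-\x) + \bepsilon_{\x_+}$, so for a \emph{fixed} positive sample the relevant function class restricted to the unlabeled data is a class of linear threshold functions in $\R^p$ whose complexity is governed by the sparsity structure of $\H$; symmetrically for the positive side. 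I would then cite/derive the covering-number bounds for each of the three hypothesis spaces $\H_s$, $\H_{\mathcal{G},s}$, $\H_{\mathcal{G},\s}$: for $\H_s$, the number of possible supports is $\binom{p}{s}$ and within each support the threshold class has VC dimension $O(s)$, giving a metric entropy of order $s\log(p n'/s)$; for the group-sparse case one counts $\binom{|\mathcal{G}|}{s}$ choices of active groups each of total size at most $s\max_G|G|$, giving $s\max_G|G|\log(n'/(s\max_G|G|)) + s\log|\mathcal{G}|$; and for the exclusive-sparsity / $\ell_2$-ball-per-group case one uses a volumetric covering of each group ball, producing $(\sum_G \s_G)\log(n'/\sum_G\s_G) + |\mathcal{G}|\max_G\s_G\log|G|$. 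These are precisely the three definitions of $\phi_{n'}$ in the statement, with the $\log(1/\delta)$ absorbed from the tail term.

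The remaining steps are routine: plug the entropy bounds into a Dudley-integral or a direct symmetrization-plus-union bound to get each one-sample supremum of order $\sqrt{\phi_{n}/n}$ and $\sqrt{\phi_{n_+}/n_+}$ respectively; add the two tail terms; and finally multiply through by $(1-\pi)^{-1}$ using the linear identities to obtain $\widehat{\AUC}(\w) - \AUC(\w) = O\!\big((1-\pi)^{-1}(\sqrt{\phi_n/n} + \sqrt{\phi_{n_+}/n_+})\big)$. I should also note in passing that $\E[\widehat{\BAUC}(f)] = \BAUC(f)$, which follows from linearity of expectation applied termwise to \eqref{eq:BAUC:e}, as promised in the text; this legitimizes centering the empirical process at $\BAUC$. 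The main obstacle I anticipate is the two-sample (pairwise) structure: unlike a standard i.i.d.\ Rademacher bound, $\widehat{\BAUC}$ is not an average of independent summands, so I must be careful to decompose the deviation cleanly into the two independent sources of randomness and to verify that the outlier perturbation $\bepsilon_{\x_+}$ — which is \emph{optimized}, not fixed — does not inflate the complexity beyond the $\binom{n_+}{t}$-type combinatorial factor that is harmless at the scale of $\phi_{n_+}$ (indeed it contributes a term of order $t\log(n_+/t)$ which is dominated since $t \le n_+$ and is of the same flavor as the sparsity terms already present). Getting the bookkeeping of constants and the interplay between $n$, $n_+$, $s$, and $t$ right in the covering-number estimates is where the care is needed, but no step is conceptually deep beyond the standard uniform-convergence toolkit for sparse linear classes.
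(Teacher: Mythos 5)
Your proposal follows essentially the same route as the paper: reduce to a uniform bound on $\widehat{\BAUC}(\w)-\BAUC(\w)$ via the linear identity of Theorem~\ref{thm:AUCandBAUC}, split the two-sample (pairwise) deviation into two one-sample deviations over $\X$ and $\X_+$ by adding and subtracting the conditional expectation, and control each supremum by a growth-function/VC counting argument tailored to $\H_s$, $\H_{\mathcal{G},s}$, $\H_{\mathcal{G},\s}$ combined with Hoeffding's inequality and a union bound, finally rescaling by $(1-\pi)^{-1}$. The only substantive difference is cosmetic: the theorem (and the paper's proof) concerns $\widehat{\AUC}(\w)$ with no outlier vector, so your extra bookkeeping for the optimized $\bepsilon$ is unnecessary, though harmless.
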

\begin{proof}
To prove the error bound between $\widehat{\AUC}$ and $\AUC$, we start to measure the difference between $\widehat{\BAUC}$ and $\BAUC$. First from the definition of $\BAUC$
\begin{equation*}
\widehat{\BAUC}(\w)=\frac{1}{{nn_+}}\sum\limits_{\scriptstyle \x_ +   \in \X_ +   \hfill \atop \scriptstyle \x \in \X \hfill} {\1(\w^\top \x_ +   \le \w^\top \x)},
\end{equation*}
we can take its expectation
\begin{equation*}
\begin{aligned}
\E[\widehat{\BAUC}(\w)]&=\E[\frac{1}{{nn_ +  }}\sum\limits_{\scriptstyle \x_ +   \in \X_ +  \atop \scriptstyle \x \in \X } {\1(\w^\top \x_ +   \le \w^\top \x)}] \\
&=\frac{1}{{nn_+}}\sum\limits_{\scriptstyle \x_ +   \in \X_ +  \atop \scriptstyle \x \in \X } { \mathbb{E}_{\scriptstyle \x_ +  \sim \D_ +   \hfill \atop \scriptstyle \x \sim \D \hfill} [\1(\w^\top \x_ +   \le \w^\top \x)]} \\
&={ \E_{\scriptstyle \x_ +  \sim \D_ +   \hfill \atop \scriptstyle \x \sim  \D \hfill} [\1(\w^\top \x_ +   \le \w^\top \x)]} \\
&=\BAUC(\w).
\end{aligned} 
\end{equation*}
Then we apply the connection between $\AUC$ and $\BAUC$ in Theorem~\ref{thm:AUCandBAUC} to obtain the following connection:
\begin{equation}
\label{dependence}
\widehat{\AUC}(\w)-\AUC(\w)=\frac{1}{1-\pi}(\widehat{\BAUC}(\w)-\BAUC(\w)).
\end{equation}
For simplicity, we use $\1(\cdot)$ to denote $\1(\w^\top \x_ +   \le \w^\top \x)$ in the following. Next we estimate the probabilistic error bound between $\BAUC$ and $\widehat{\BAUC}$.
%\begin{equation}
\begin{align}
\nonumber
&\mathbb{P}(\widehat{\BAUC}(\w)-\BAUC(\w)\geq\epsilon)
\\ \nonumber
&=\mathbb{P}\Bigg (\frac{1}{{nn_ +  }}\sum\limits_{\scriptstyle \x_ +   \in \X_ +   \hfill \atop \scriptstyle \x \in \X \hfill} {{\bf 1}(\cdot)}-\frac{1}{{n_ +  }}\sum\limits_{\scriptstyle \x_ +   \in \X_ +} {\mathbb{E}_{\scriptstyle \x   \sim \D} {\bf 1}(\cdot)} 
\\ \nonumber
&\quad+\frac{1}{{n_ +  }}\sum\limits_{\scriptstyle \x_ +   \in \X_ +} {\mathbb{E}_{\scriptstyle \x  \sim \D} {\bf 1}(\cdot)}-{\mathbb{E}_{\scriptstyle \x_ +  \sim \D_ +   \hfill \atop \scriptstyle \x \sim  \D \hfill} {\bf 1}(\cdot)} \geq \epsilon \Bigg )
\\ \nonumber
&\leq \mathbb{P}\Bigg(\frac{1}{{nn_+}}\sum\limits_{\scriptstyle \x_ +   \in \X_ +   \hfill \atop \scriptstyle \x \in\X \hfill} {{\bf 1}(\cdot)}-\frac{1}{{n_ +  }}\sum\limits_{\scriptstyle \x_ +   \in \X_ +} { \mathbb{E}_{\scriptstyle \x   \sim \D} {\bf 1}(\cdot)} \geq \frac{\epsilon}{2}\Bigg) 
\\ \nonumber
&\quad+\mathbb{P}\Bigg(\frac{1}{{n_ +  }}\sum\limits_{\scriptstyle \x_ +   \in \X_ +} { \mathbb{E}_{\scriptstyle \x   \sim \D} {\bf 1}(\cdot)}-{ \mathbb{E}_{\scriptstyle \x_ +  \sim \D_ +   \hfill \atop \scriptstyle \x \sim  \D \hfill} {\bf 1}(\cdot)} \geq \frac{\epsilon}{2}\Bigg)
\\ \nonumber
& \quad \quad \quad
\text{(From union bound)}
\\
&\leq\mathbb{P}\Bigg(\max_{\X_+\atop \w\in \mathcal{H}} \left\{\frac{1}{{n}}\sum\limits_{\scriptstyle \x \in \X \hfill} {{\bf 1}(\cdot)}- \mathbb{E}_{\scriptstyle \x  \sim \D} {\bf 1}(\cdot)\right\} \geq \frac{\epsilon}{2}\Bigg) +\quad \label{eq:proof:A}
\\
&\quad\mathbb{P}\Bigg(\max_{\X\atop \w\in \mathcal{H}}\left\{ \frac{1}{{n_+}}\sum\limits_{\scriptstyle \x_+ \in \X_+ \hfill} {{\bf 1}(\cdot)}- \mathbb{E}_{\scriptstyle \x_+  \sim \D_+} {\bf 1}(\cdot)\right\} \geq \frac{\epsilon}{2}\Bigg). \label{eq:proof:B}
\end{align}
%\end{equation}
We first consider the case $\H = \H_s$. We only provide the upper bound for \eqref{eq:proof:A} (The upper bound for \eqref{eq:proof:B} can be obtained similarly.)
\begin{equation}
\eqref{eq:proof:A} \leq \gamma(\H_s, 2n, p) \times \mathbb{P}\Bigg(\frac{1}{{n}}\sum\limits_{\scriptstyle \x \in \X \hfill} {{\bf 1}(\cdot)}- \mathbb{E}_{\scriptstyle \x  \sim {\rm D}} {\bf 1}(\cdot) \geq \frac{\epsilon}{2}\Bigg).
\label{eq:proof:A1}
\end{equation}

Fixing $\w \in \mathcal{H}$ and $\x$ in \eqref{eq:proof:A}, we have
\begin{equation}
\label{eqfixw}
\begin{aligned}
&\mathbb{P}\Bigg(\frac{1}{{n}}\sum\limits_{\scriptstyle \x \in \X \hfill} {{\bf 1}(\cdot)}- \mathbb{E}_{\scriptstyle \x  \sim {\rm D}} {\bf 1}(\cdot) \geq \frac{\epsilon}{2}\Bigg) \leq \exp(-n\epsilon^2),
\end{aligned}
\end{equation}
which is bounded from Hoeffding's inequality. $\gamma(\H_s,2n,p)$ is defined as the \# of possible configuration of  ${\bf 1}(\cdot)$ on $2n$ points in $\mathbb{R}^p$ for $\H_s$. We have 
\begin{equation}
\label{eqgam}
\begin{aligned}
&\gamma(\mathcal{H}_s,2n,p) \leq \left( {\begin{matrix}
   p  \\ s \\ \end{matrix}} \right)\gamma(\mathbb{R}^s,2n,s) \leq  \left({\begin{matrix} p\\s\\\end{matrix}}\right)\left(\frac{2e2n}{s}\right)^s \leq  p^s \cdot (4e n/s)^s= (4epn/s)^s,
\end{aligned}
\end{equation}
where the second inequality uses the VC dimension for linear classifier \citep{vapnik2006estimation}. So we get the upper bound for \eqref{eq:proof:A} by \eqref{eqfixw} and \eqref{eqgam}
\begin{equation}
\begin{aligned}
\eqref{eq:proof:A} \leq (4epn/s)^s\cdot \exp (-n\epsilon^2).
\label{eq:proof:C}
\end{aligned}
\end{equation}
Similarly we can obtain the upper bound for \eqref{eq:proof:B}
\begin{equation}
\begin{aligned}
\eqref{eq:proof:B} \leq (4epn_+/s)^s\cdot \exp (-n_+\epsilon^2).
\label{eq:proof:D}
\end{aligned}
\end{equation}
So we can get \eqref{eq:proof:E} by \eqref{eq:proof:C} and \eqref{eq:proof:D}:
\begin{equation}
\begin{aligned}
&\mathbb{P}(\widehat{\BAUC}(\w)-\BAUC(\w)\geq\epsilon) \leq O((pn/s)^s\cdot \exp (-n\epsilon^2)+ (pn_+/s)^s\cdot \exp (-n_+\epsilon^2)).
\label{eq:proof:E}
\end{aligned}
\end{equation}
Let the right hand side in the above inequality be bounded by $\delta$, we have 
\begin{equation*}
\begin{aligned}
&\widehat{\BAUC}(\w)-\BAUC(\w)\leq O\Bigg(\sqrt{\frac{\phi_{n}}{n}}
+\sqrt{\frac{\phi_{n_+}}{n_+}}\Bigg),
\end{aligned}
\end{equation*}
holds with probability less than $\delta$, and $\phi_{n'}:={s\log (pn'/s) + \log \frac{1}{\delta }}$.
Using the dependence in \eqref{dependence}, we obtain
\begin{equation*}
\begin{aligned}
&\widehat{\AUC}(\w) - \AUC(\w)\le O\Bigg(\frac{1}{{1 - \pi }}\Bigg(\sqrt {\frac{\phi_{n}}{n}} + \sqrt {\frac{\phi_{n_+}}{{n_ +  }}}\Bigg )\Bigg),
\end{aligned}
\end{equation*}
with probability at least $1-\delta$.

To show the bound for $\H = \H_{\mathcal{G}, s}$, we only need to estimate \eqref{eq:proof:A1} by taking $\H$ as $\H_{\mathcal{G}, s}$ and estimate the upper bound for $\gamma(\H_{\mathcal{G},s}, 2n, p)$
\begin{align*}
\gamma(\H_{\mathcal{G},s}, 2n, p) \leq &
\left( {\begin{matrix}
   |\mathcal{G}| \\ s \\ \end{matrix}} \right)\gamma(\mathbb{R}^{s\max_{G\in \mathcal{G}}|G|},2n, s\max_{G\in \mathcal{G}}|G|)
   \leq 
   |\mathcal{G}|^s \left({2e2n \over s\max_{G\in \mathcal{G}}} \right)^{s \max_{G\in \mathcal{G}} |G|}. 
   %\\    \leq & C \cdot |\mathcal{G}|^s \left({n\over s\max_{G\in \mathcal{G}}}\right)^{s\max_{G\in \mathcal{G}} |G|}
\end{align*}
Then we can follow the proof for $\H=\H_s$ to obtain the bound for $\H_{\mathcal{G}, s}$. 

We conclude the proof by considering the last case $\H = \H_{\mathcal{G}, \s}$. Following the same idea before, we only need to estimate \eqref{eq:proof:A1} by taking $\H$ as $\H_{\mathcal{G}, \s}$ and estimate the upper bound for $\gamma(\H_{\mathcal{G},\s}, 2n, p)$
\begin{align*}
\gamma(\H_{\mathcal{G},\s}, 2n, p) \leq &
\gamma(\mathbb{R}^{\sum_{G\in \mathcal{G}}\s_G},2n,\sum_{G\in \mathcal{G}}\s_G)\prod_{G\in \mathcal{G}}\left( {\begin{matrix}
   |G| \\ \s_G \\ \end{matrix}} \right)
   \\
   \leq & 
   \left({2e2n \over \sum_{G\in \mathcal{G}} \s_G}\right)^{\sum_{G\in \mathcal{G}} \s_G} \prod_{G\in \mathcal{G}} |G|^{\s_G}  
\\
\leq &
   \left({4en \over \sum_{G\in \mathcal{G}} \s_G}\right)^{\sum_{G\in \mathcal{G}} \s_G} \left(\max_{G\in \mathcal{G}} |G|^{\s_G}\right)^{|\mathcal{G}|}.
\end{align*}
It completes the proof.
\end{proof}
%{\rc Due to the space limitation, the complete proof is provided in the Supplemental Material.}
This theorem provides the upper bound of the difference between the empirically obtained $\widehat{\AUC}$ and the true $\AUC$. This leads to the following interesting observations:
\begin{itemize} %[noitemsep,topsep=0pt,leftmargin=*]
\item When the number of unlabeled samples is more than the positive samples, the improvement on this bound is quite limited by increasing the number of unlabeled samples. 
\item The complexity of $\w$ affects this bound significantly. Let us consider the case $\H=\H_s$. Note that the error bound linearly depends on the sparsity parameter $s$. When $\min(n, n_+) = O(s \log p)$, the error bound converges to zero, which coincides with the consistency analysis for sparse signal recovery (for example, \citep{zhang2011sparse}). Actually, the error bounds for $\H=\H_{\mathcal{G}, s}$ and $\H = \H_{\mathcal{G}, \s}$ suggest similar observations. 
\item It is worthy of pointing out that if the super group set $\mathcal{G} = \{\{1\}, \{2\}, \cdots, \{p\}\}$ contains $p$ singleton groups, then it is known that $\H_{s} = \H_{\mathcal{G}, s}$ and the provided error bounds for $\H$ and $\H_{\mathcal{G}, s}$ are the same. And if the super set $\mathcal{G}= \{ \{1,2,\cdots, p\}\}$ only contains a single group and $\s_{G} = s$, then it is known that $\H_{s} = \H_{\mathcal{G}, \s}$ and the suggested error bounds are the same as well.

\item In addition, we can see that both $\H_{\mathcal{G}, \s}$ and $\H_{\mathcal{G}, s}$ have better error bound (i.e. smaller $\phi_{n'}$) than $\H_{s}$ when the number of nonzero elements are the same. Suppose we have $|\mathcal{G}|=k$ groups, and all the groups $G\in \mathcal{G}$ have the same size $p/k$ (for convenience, suppose $p$ is dividable by $k$). Let all the models have no more than $s$ nonzero elements. In this case, we compare the models under the sparsity hypotheses $\H_{s}$, $\H_{\mathcal{G}, sk/p}$ and $\H_{\mathcal{G}, \s}$ where $\s_G = s/k,\ \forall G\in \mathcal{G}$. According to Theorem~\ref{thm:VC}, we know that
$$\phi_{n'}(\H_{\mathcal{G}, sk/p})=s(\log(n'/s) + (k/p)\log k) + \log(1/\delta)$$ and 
$$\phi_{n'}(\H_{\mathcal{G}, \s}) = s(\log(n'/s) + \log (p/k)) + \log(1/\delta).$$
While $\phi_{n'}(\H_{s})=s(\log(n'/s) + \log p) + \log(1/\delta)$. Therefore, we have that $\phi_{n'}(\H_{\mathcal{G}, \s}) \leq \phi_{n'}(\H_{s})$ and $\phi_{n'}(\H_{\mathcal{G}, sk/p}) \leq \phi_{n'}(\H_{s})$.
\end{itemize}

\section{Experiments}
\label{experiments}

In this section, we will thoroughly evaluate the proposed $\BAUC$-OF model. First, we will test how the number of unlabeled training samples affects the $\AUC$ value to validate our theoretical analysis in Theorem~\ref{thm:VC} using synthetic data.  
%prove our theoretical result in section \ref{theory}. 
Then, we compare the proposed model with the error minimization model using both synthetic data and real datasets. Finally, we further apply the proposed method on two real-world applications, the prediction of surgical site infection (SSI) and detection of seizure.

\begin{figure}[h]
\begin{center}
\includegraphics[width=0.8\textwidth]{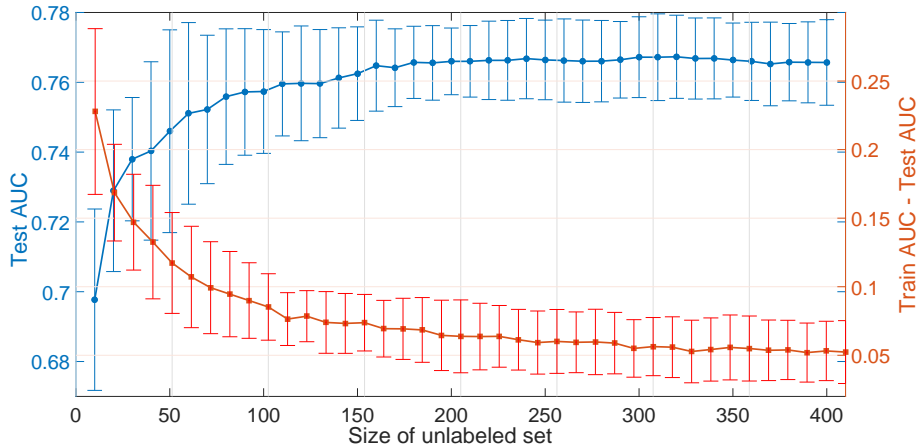}
\caption{\label{fig1}$\AUC$ curve (blue) and $\widehat{\AUC} - \AUC$ curve (red) with different sizes of the unlabeled set.}
\vspace{-6mm}
\end{center}
\end{figure}
\subsection{Empirical Validation of Theorem ~\ref{thm:VC}}

This section conducts empirical experiments to study how the number of unlabeled samples affects the $\AUC$ value, and evaluate the difference between the $\AUC$ and the empirical $\widehat{\AUC}$. Here, $\widehat{\AUC}$ is calculated using the training data, and $\AUC$ is calculated using the testing data with a sufficient number of samples. The positive samples follow the Gaussian distribution $\mathcal{N}(\1,7I)$ while the negative samples follow the Gaussian distribution $\mathcal{N}(-\1,7I)$. Note that this case is not linearly separable. The number of positive samples is fixed as $n_+ =50$. We also generate $n$ unlabeled samples where 10 \% of them are generated from the distribution model of positive samples. We gradually increase the size of unlabeled data from $n= 10$ to $n=200$. All experiments are repeated 10 times to obtain the mean and variance of the performance metrics.

We apply the PSG algorithm to solve the $\BAUC$ model (without feature selection and outlier detection) on synthetic datasets with various sizes of unlabeled sets. Results are reported in Figure~\ref{fig1}, i.e., the two curves correspond to the $\AUC$ and $\widehat{\AUC} - \AUC$, respectively. It indicates that when the number of unlabeled samples is more than 5 times of the number of positive samples, the improvement on $\AUC$ becomes quite minor and the estimation error between $\widehat{\AUC}$ and $\AUC$ does not change too much. This observation is consistent with our analysis in Theorem~\ref{thm:VC}. It essentially suggests that it is not necessary to include all the unlabeled data in the training process when the unlabeled data points are substantially more than the positive samples.

%when the size of unlabeled set is small, the AUC increases fast with the growing size of the unlabeled set. But when the size of data is the same or larger than the number of positive set. The AUC increases very slowly. 

%\subsection{Validation Of The Effectiveness Of The Proposed Model~\eqref{auc-smo}}
\subsection{Comparison of the Proposed Model~\eqref{auc-smo} with the State-of-the-Art Methods} \label{exp1}

This section compares the proposed model with the error minimization model and its variants. Specifically, the comparison involves \bc{9} algorithms: \bc{SVM under the ideal case (true labels of negative samples are known), one-class SVM (popular one-class classification algorithm) \cite{manevitz2001one},}  Biased SVM \citep{hoi2004biased} (the state-of-the art algorithm), $\ERR$ (the leading algorithm recently developed in \cite{plessis2015convex}, that is, \eqref{error-out} without outlier detection and feature selection), $\ERR$-O (\cite{plessis2015convex} + the proposed outlier detection), $\ERR$-OF (\cite{plessis2015convex} + the proposed outlier detection and feature selection),  $\BAUC$ (\eqref{auc-smo} without outlier detection and feature selection), $\BAUC$-O (\eqref{auc-smo} only with outlier detection), and $\BAUC$-OF in \eqref{auc-smo} (the complete version of the proposed model). \bc{Among them, the SVM under the ideal case serves as the baseline or gold standard where the true labels of the negative samples are known. It serves as the standard for us to evaluate the performance of all the other algorithms.}

%\subsubsection{Data sets}
{\bf Synthetic datasets:} 
%<<<<<<< HEAD
We consider the binary classification task. 
%Due to the space limit, the details of generating synthetic data sets are included in the supplementary material.
{Each feature vector $\x\in \R^p$ contains $s$ relevant features while the remaining is irrelevant.
% {\rc Please find more details in supplementary material.}
The relevant features are generated from $\mathcal{N}(1,7^2)$ and $\mathcal{N}(-1,7^2)$ for positive and negative samples respectively. Irrelevant features are generated from $\mathcal{N}(0,5^2)$. The wrong positive samples (outliers) are generated from $\mathcal{N}(-10,1)$. The training set contains 100 positive samples (containing wrong samples or outliers) and 300 unlabeled samples (20 positive samples + 280 negative samples). This leads to 30000 pairs in the $\AUC$ formulation. Test samples are generated in the same way with 1200 positive samples and 2800 negative samples. We varied the number of outliers and the number of features $p$ to compare all algorithms. 
}

\begin{table*}[htp!]

\begin{center}
\caption{AUC value (\%) comparison among nine algorithms on synthetic datasets with the different number of features (\#F). The number of true features is fixed to 40; the number of outliers is fixed to 7.}
\label{table:1}

\resizebox{\columnwidth}{!}{%
\begin{tabular}{c|c|cccccccc} 
 \hline
\#F& SVM(ideal) & One-class SVM & BSVM & ERR & BAUC & ERR-O &  ERR-OF & BAUC-O & BAUC-OF \\ 
 \hline\hline
  40 & \bc{88.01$\pm$0.60} &\bc{66.58$\pm$7.04} & 85.58$\pm$2.21 &84.14$\pm$1.51& 85.10$\pm$1.39 & 88.07$\pm$0.71& 87.97$\pm$0.72 & 88.02$\pm$0.69 & \textbf{88.21$\pm$0.63} \\ 
 \hline
  80 & 86.60$\pm$0.90 &63.56$\pm$6.58 & 78.17$\pm$2.31 &78.45$\pm$1.72& 80.60$\pm$1.56 & 86.00$\pm$0.84& 86.16$\pm$0.77 & 85.98$\pm$0.90 & \textbf{86.22$\pm$1.64} \\
 \hline
 120 &84.39$\pm$1.04 & 66.84$\pm$5.12 & 75.11$\pm$1.76 &76.03$\pm$1.71& 78.50$\pm$1.67 & 84.45$\pm$0.76& 84.65$\pm$1.02& 84.64$\pm$0.69& \textbf{86.28$\pm$1.07} \\
 \hline
 160 &83.01$\pm$1.01 & 62.46$\pm$4.55 & 70.52$\pm$1.86 &72.32$\pm$2.36& 75.02$\pm$2.42 & 82.40$\pm$1.28& 82.81$\pm$1.71& 82.39$\pm$1.37& \textbf{83.03$\pm$2.08} \\ 
 \hline
 200 &82.30$\pm$1.30 &60.49$\pm$4.01 & 70.12$\pm$1.52 &71.11$\pm$0.87& 74.11$\pm$0.78 & 81.43$\pm$0.52& 82.94$\pm$1.63& 81.29$\pm$0.66& \textbf{83.77$\pm$2.22} \\  
 \hline
 240 &81.37$\pm$1.59 &60.05$\pm$5.66 & 68.85$\pm$1.64 &68.64$\pm$2.87& 71.69$\pm$2.46 & 79.55$\pm$1.50& 81.47$\pm$2.36& 79.36$\pm$1.60& \textbf{81.54$\pm$2.36} \\ 
 \hline
 280 &80.44$\pm$1.44 &60.69$\pm$5.29 & 66.93$\pm$1.65 &68.55$\pm$2.04& 71.16$\pm$1.84 & 79.20$\pm$1.33& \textbf{81.50$\pm$1.51}& 78.86$\pm$1.12& 81.02$\pm$2.22 \\ 
 \hline
 320 &79.09$\pm$1.58 &59.12$\pm$3.02 & 66.85$\pm$1.25 &68.19$\pm$1.92& 70.81$\pm$2.00 & 78.60$\pm$1.26& 81.08$\pm$1.09& 78.49$\pm$1.13& \textbf{81.47$\pm$1.22} \\ 
 \hline
 360 &79.02$\pm$1.24 &57.24$\pm$5.03 & 63.77$\pm$0.83 &65.53$\pm$1.96& 68.19$\pm$2.02 & 76.47$\pm$1.54& \textbf{79.61$\pm$2.29}& 76.64$\pm$1.75& 78.44$\pm$2.20 \\ 
 \hline
400 &76.94$\pm$1.54 & 57.42$\pm$6.32 & 63.09$\pm$1.02 &65.39$\pm$1.94& 67.99$\pm$1.12 & 76.13$\pm$1.54& 78.27$\pm$2.34& 75.92$\pm$1.47& \textbf{78.77$\pm$3.10} \\ 
 \hline
\end{tabular}}
\end{center}
\end{table*}

\begin{table*} [htpb!]

\begin{center}
\caption{AUC value (\%) comparison among seven algorithms on synthetic datasets with the different number of outliers (\#O). The number of true features is fixed to 40; the number of redundant features is fixed to 160. }
\label{table:2}
\resizebox{\columnwidth}{!}{%
 \begin{tabular}{c|c|cccccccc} 
 \hline
\#O& SVM(ideal) & One-class SVM&BSVM& ERR& BAUC & ERR-O & ERR-OF  & BAUC-O & BAUC-OF \\ 
 \hline\hline
  1 & 84.10$\pm$0.85 & 80.30$\pm$2.40 &77.13$\pm$1.74& 79.84$\pm$1.89& 80.21$\pm$1.70 & 80.94$\pm$1.72& 83.14$\pm$2.53 & 80.88$\pm$1.64 & \textbf{83.93$\pm$2.30} \\ 
 \hline
 2 & 83.90$\pm$1.13& 78.06$\pm$2.80  &75.94$\pm$2.03& 78.97$\pm$0.74& 79.58$\pm$0.91 & 81.33$\pm$0.73& 83.19$\pm$1.21& 81.28$\pm$0.85 & \textbf{84.02$\pm$1.82} \\ 
 \hline
 3 & 82.79$\pm$0.87&77.60$\pm$2.13  &73.22$\pm$1.25& 77.75$\pm$1.43& 78.56$\pm$1.23 & 81.38$\pm$1.15& 83.80$\pm$1.35 & 81.29$\pm$1.06 & \textbf{84.17$\pm$1.20} \\
 \hline
 4 & 83.52$\pm$0.83 &75.37$\pm$3.36  &72.67$\pm$2.51& 76.29$\pm$1.61& 77.78$\pm$1.39 & 81.41$\pm$1.31& 83.07$\pm$2.05 & 81.24$\pm$1.37 & \textbf{83.49$\pm$1.67} \\
 \hline
 5 &83.42$\pm$1.22 &71.64$\pm$4.48  &71.59$\pm$1.69& 75.07$\pm$1.95& 77.13$\pm$1.76 & 81.57$\pm$1.32& 83.75$\pm$2.01 & 81.52$\pm$1.39 & \textbf{83.91$\pm$1.95} \\ 
 \hline
 6 &82.41$\pm$1.53 &67.30$\pm$6.58  &69.12$\pm$2.13& 73.70$\pm$2.35& 76.17$\pm$2.33 & 81.71$\pm$1.60& 83.74$\pm$2.26& 81.64$\pm$1.40 & \textbf{83.77$\pm$2.44} \\  
 \hline
 7 &82.10$\pm$1.63 & 61.47$\pm$5.1  &67.09$\pm$1.49& 70.41$\pm$2.87& 73.29$\pm$2.45 & 80.89$\pm$1.83& 82.86$\pm$2.41& 80.97$\pm$1.77& \textbf{83.30$\pm$2.90} \\ 
 \hline
 8 &81.63$\pm$0.87 & 57.70$\pm$4.59 &65.67$\pm$1.69& 69.03$\pm$2.16& 71.86$\pm$1.94 & 81.22$\pm$1.19& 83.51$\pm$2.01& 81.18$\pm$1.48& \textbf{83.77$\pm$1.73} \\ 
 \hline
\end{tabular}}
\end{center}
\end{table*}

\begin{table*} [htpb!]

\begin{center}
\caption{AUC value (\%) comparison among seven algorithms on real datasets. The first five rows correspond the dataset Arrhythmia with different setup.}
\label{table:3}
\resizebox{\columnwidth}{!}{%
\begin{tabular}{c|c|cccccccc} 
 \hline
  Datasets & SVM(ideal)&One-class SVM& BSVM& ERR &ERR-O &ERR-OF & BAUC & BAUC-O & BAUC-OF \\
 \hline\hline
  disease 1 vs disease 2& 97.08 & 57.42 &89.86 & 95.69 & 95.69 & 97.08 & 95.55 & 95.55 & \textbf{97.22} \\ 
 \hline
  disease 1 vs health& 91.74 & 61.27 &88.10& 89.27 & 90.00 & \textbf{94.99} & 88.52 & 88.52 & 94.94 \\ 
 \hline
  disease 2 vs health& 87.70 & 60.28 &87.00& 92.58 & 92.58 & \textbf{93.71} & 92.47 & 92.47 & 93.58 \\ 
 \hline
 health vs disease 1,2& 84.09 & 51.22 &78.14& 77.33 & 77.33 & 83.17 & 77.19 & 77.30 & \textbf{83.21} \\ 
 \hline
 health vs all& 78.05 & 53.25 &76.54& 77.27 & 77.28 & \textbf{77.91} & 77.39 & 77.39 &77.74 \\ 
 \hline
 SPECTF& 81.43 & 52.30 & 79.31 &80.00 & 80.00 & 80.66 & 80.24 & 80.36 & \textbf{80.73} \\ 
 \hline
 Readmission& 73.22 & 71.48 &72.89& 72.60 & 72.60 & 72.74 & 72.80 & 72.80 & \textbf{72.92} \\ 
 \hline
 Readmission(outlier)& 67.40 & 70.73 &64.00& 67.50 & 71.11 & 71.14 & 67.61 & 72.38 & \textbf{72.42}\\
 \hline
 Hill-Valley& 96.39 & 52.34 &84.15& 88.17 & 88.19 & 88.19 & 94.66 & 95.82 & \textbf{95.82}\\
 \hline
 Hill-Valley(noise)& 87.30 & 53.43 &80.73& 81.28 & 81.51 & 81.51 & 84.28 & 84.29 & \textbf{84.29} \\ \hline
\end{tabular}}
\end{center}
%}
\vspace{-4mm}
\end{table*}
% <<<<<<< HEAD
% =======
% { \bf Real data sets:} Five real datasets are used to validate the proposed model, including Arrhythmia, SPECTF Heart, Readmission, noiseless Hill-Valley, and noise Hill-Valley \citep{Lichman:2013}. Please find more details in the supplementary material about these datasets and how the training and testing datasets are generated. 
% >>>>>>> c292e48d43501c19bae6857343a15d350a0f5afd

{\bf Real datasets:} Five real datasets are used to validate the proposed model, including Arrhythmia, SPECTF Heart, Readmission, noiseless Hill-Valley, and noise Hill-Valley \citep{Lichman:2013}.
The first real data set is the Arrhythmia data from the UCI data set. By choosing different groups of labels as positive class and negative class, we get five learning scenarios as shown in Table~\ref{table:3}. In this dataset, label 1, 2 and 10 are chosen as health, disease type 2, and disease type 1 respectively. \bc{ The reason to choose these three labels is that the number of people in these classes is large enough. The sizes of the training sets for five learning scenarios are $40, 100, 100, 60, 100$ respectively, with the number of positive data being $20$ in all these sets.} The second dataset is the SPECTF Heart Data Set. We choose label 0 as positive class. The size of training set is 80 with 50\% positive class. \bc{ The third and fourth data sets (i.e. Readmission and Readmission (outlier) in Table~\ref{table:3}) are generated from medical readmission dataset. In our experiments, we randomly choose 20 positive samples (no readmission) and 30 negative samples to form the training set. For the experiment with outliers (i.e. fourth data set), we randomly add 3 negative samples into the training set.} The last two datasets are the noisy version and noiseless version of the Hill-Valley dataset. We randomly choose 50 positive samples (Hill) and 150 negative samples (Valley) to form our training set, and the rest samples are used for testing. Note that, for all the training sets, we use the true class prior in the label error-based algorithm. At last, during the experiments, $75\%$ of the randomly chosen positive data inside the training sets is known to the algorithms.
% {\rc Please find more details in the supplementary material about these datasets and how the training and testing datasets are generated. }
%{\bf Real data} Five real data sets are used to validate the proposed model. 

%\subsubsection{Experimental results}

{\bf Parameter tuning:} Without specification, we use the following way to tune the hyperparameters in our model \eqref{auc-smo}. In the experiment, we choose $\H=\H_s$. There are four hyperparameters: $\alpha$, $\beta$, $t$ (the outlier upper bound), and $s$ (the feature sparsity). Since $\alpha$ and $\beta$ are just used to restrict the magnitude of $\w$ and $\epsilon$, the performance is less sensitive to these two hyperparameters. So in practice, they are chosen to be small values, e.g., $\alpha = \beta = 0.001$.
$t$ and $s$ are important to the performance. \bc{They serve the same purpose as the weight of $L1$ norm sparse regularization, but they are discrete and much easier to tune. We initialize both $t$ and $s$ by small integers (e.g., $t=0$ and $s=5\%$ of the total number of features) and increase the value of each hyperparameter in a greedy manner, until that the performance on training set stops improving.}
% We compare the algorithms on all the datasets.
% Parameters in all models are tuned by searching over a large range of the tuning parameters.

The results for synthetic data are shown in Table~\ref{table:1} and Table~\ref{table:2}. Results for real data are shown in Table~\ref{table:3}.
Overall, the performance of the proposed $\BAUC$-OF model outperforms other models. \bc{The performance of all the PU learning algorithms without feature selection and outlier detection declines greatly when redundant features and outliers exist. Comparing to the traditional classification problems (ideal SVM), the performance of PU learning algorithms decreases more rapidly. Thus we can conclude PU learning is much more sensitive towards irrelevant features and noise inside the dataset. Intuitively, when all kinds of the uncertainties (unknown labels, irrelevant features, and outliers) combined and correlated together, the problem becomes much more complicated than the summation of those separated problems.}  While the proposed feature selection and outlier detection are included in the learning process, the performance is improved significantly. \bc{One-class classification is very sensitive towards the outliers as seen in the tables since it totally relies on the observed positive labeled data to make decisions.} %This improvement is more obvious in the synthetic data set. 
\bc{For the real datasets, the performance of the SVM under the ideal case usually is the best, but it can be deteriorated by the outliers.} The benefit of feature selection is significant in Readmission (outlier) data because there are outliers (false positive samples) in the training sets. For the datasets, e.g., Arrhythmia, SPECTF Heart where the number of features is not large and may contain no outliers, \bc{all algorithms except for the one-class SVM tend to achieve the same performance. As before, the performance of one-class SVM can be very sensitive to problem types. In all datasets $\BAUC$-OF achieves very similar performance as the ideal case indicating that the proposed method acts as a powerful tool in dealing with real-world problems.}

\subsection{Real-world Application I: Prediction of Surgical Site Infection}

The PU problems are common in healthcare areas. For instance, here, we study the performance of our proposed method on a prediction problem for surgical site infection (SSI). It has been a very important question to predict the SSI onset based on some risk factors and wound characteristics, however, it is usually difficult to identify all the SSI patients since that may need us to keep track of the patients who have had surgery for quite a while. It is not uncommon that many patients' final status (whether or not they develop SSI) is unknown.

This results in a typical PU problem. In our study, we have 464 subjects in total, while for each subject, 37 clinical variables (such as some wound characteristics including the induration amount, wound edge distance, and wound edge color; physiological factors such as heart rate, diastolic RR and systolic RR) are measured in multiple time points. To test our algorithm, we split the data into training and testing data, while the training data consist of 80 subjects, i.e., 40 positive samples (infected) and 40 negative samples (not infected). Further, for the training data, only 35 positive subjects are assumed to be known to us, and the rest of the subjects form the unlabeled sample. For every subject, we use the measurement of the first 9 days after surgery, resulting in a total number of features for each subject as $37\times9 = 333$. Correspondingly, group sparsity is used in our algorithm for feature selection. By employing cross-validation in a wide range of choices on the tuning parameters, we finally choose 7 groups of features and 1 outlier in the experiments. we gather performances of the competing algorithm as shown in Figure~\ref{fig:SSI}, which suggests that the proposed $\BAUC$-OF outperforms other algorithms.

\begin{figure}[h]
\begin{center}
\includegraphics[width=0.9\textwidth]{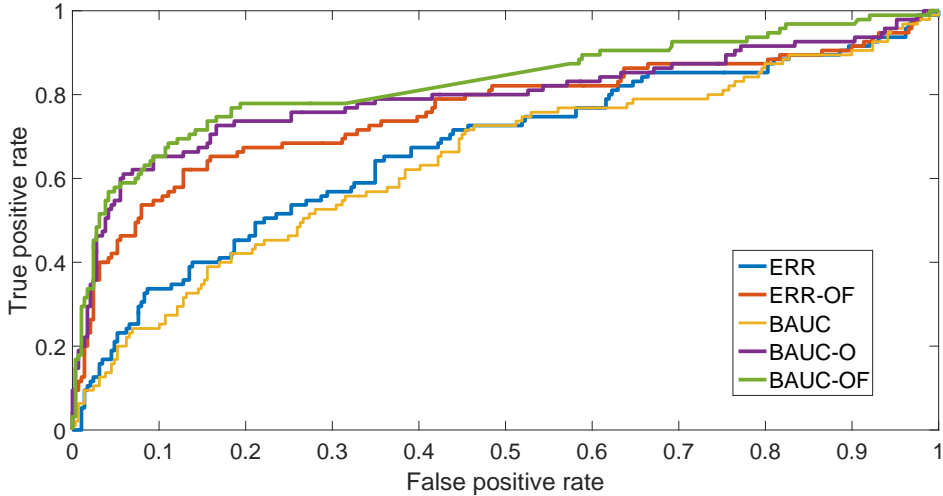}
\caption{ROC curves of the algorithms for SSI data.}
\vspace{-6mm}
\label{fig:SSI}
\end{center}
\end{figure}
% , i.e., the features that belong to the same clinical variable form a group. We set $\H=\H_{\mathcal{G}, s}$ where $s=7$ and the maximal number of outliers as $t=1$.

%new features are constructed in this way:
%$[X_2-X_1;X_3-X_3;...;X_9-X_8;X_9]$. Since every record has 37 variables, our data has a dimension of 333.

\subsection{Real-world Application II: Seizure Detection from EEG Signals} \label{eegd}
Automatic seizure detection from EEG signals has been very important in seizure prevention and control. While the EEG signals provide rich information which can be leveraged to build prediction models, it is a time-consuming task to employ domain experts to segment the massive EEG signals and assign labels to the segments. It is not uncommon that manual labeling can only be applied to a few segments, resulting in a typical PU problem.
The EEG dataset was acquired from 8 epileptic mice (4 males and 4 females) at 10-14 weeks of age at Baylor College of Medicine. EEG recording electrodes (Teflon-coated silver wire, $127\mu m$ diameter) were placed in frontal cortex, somatosensory cortex, hippocampal CA1, and dentate gyrus. Spontaneous EEG activity (filtered between 0.1 Hz and 1 kHz, sampled at 2 kHz) were recorded in freely moving mice for 2 hours per day over 3 days. An example was shown in Figure~\ref{fig:samp}.
\begin{figure}[h]
\begin{center}
\includegraphics[width=0.9\textwidth]{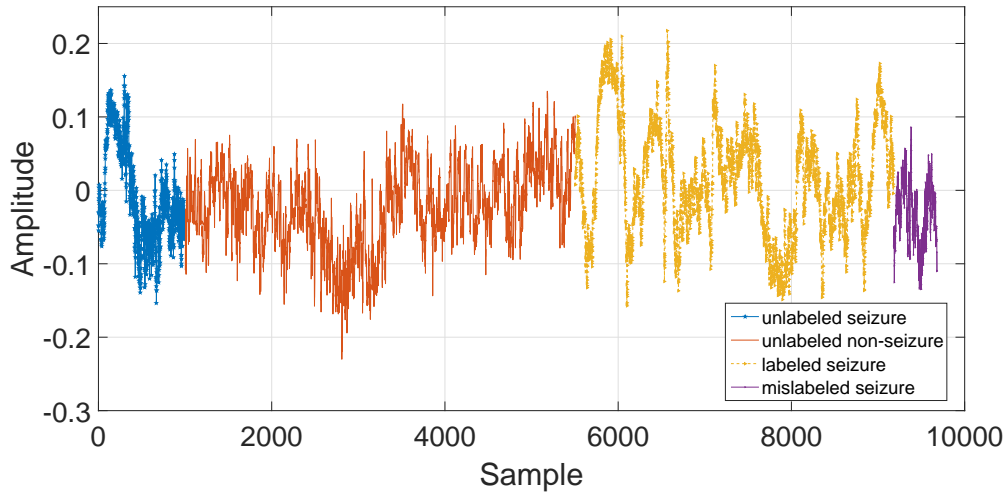}
\caption{An exemplary segment of the EEG signal. Yellow signals are ``seizure'' labeled correctly. Purple signals are not ``seizure'' but mislabeled. Blue signals are ``seizure'' that are not labeled. Red signals are non-seizure (also unlabeled).}
\vspace{-3mm}
\label{fig:samp}
\end{center}
\end{figure}
% {\rc We further describe the EEG dataset used and show an example in the supplementary materials.}

The EEG sequence for each mouse is with 261673 continuous time points (or signals), among which 21673 signals are labeled as seizure. At each time point, we extract 264 features for each signal including non-linear energy, FFT, RMS value, zero-crossing, Hjorth parameters, and entropy \citep{greene2008comparison} using the window length 2056.
\begin{figure}[h]
\begin{center}
\includegraphics[width=0.9\textwidth]{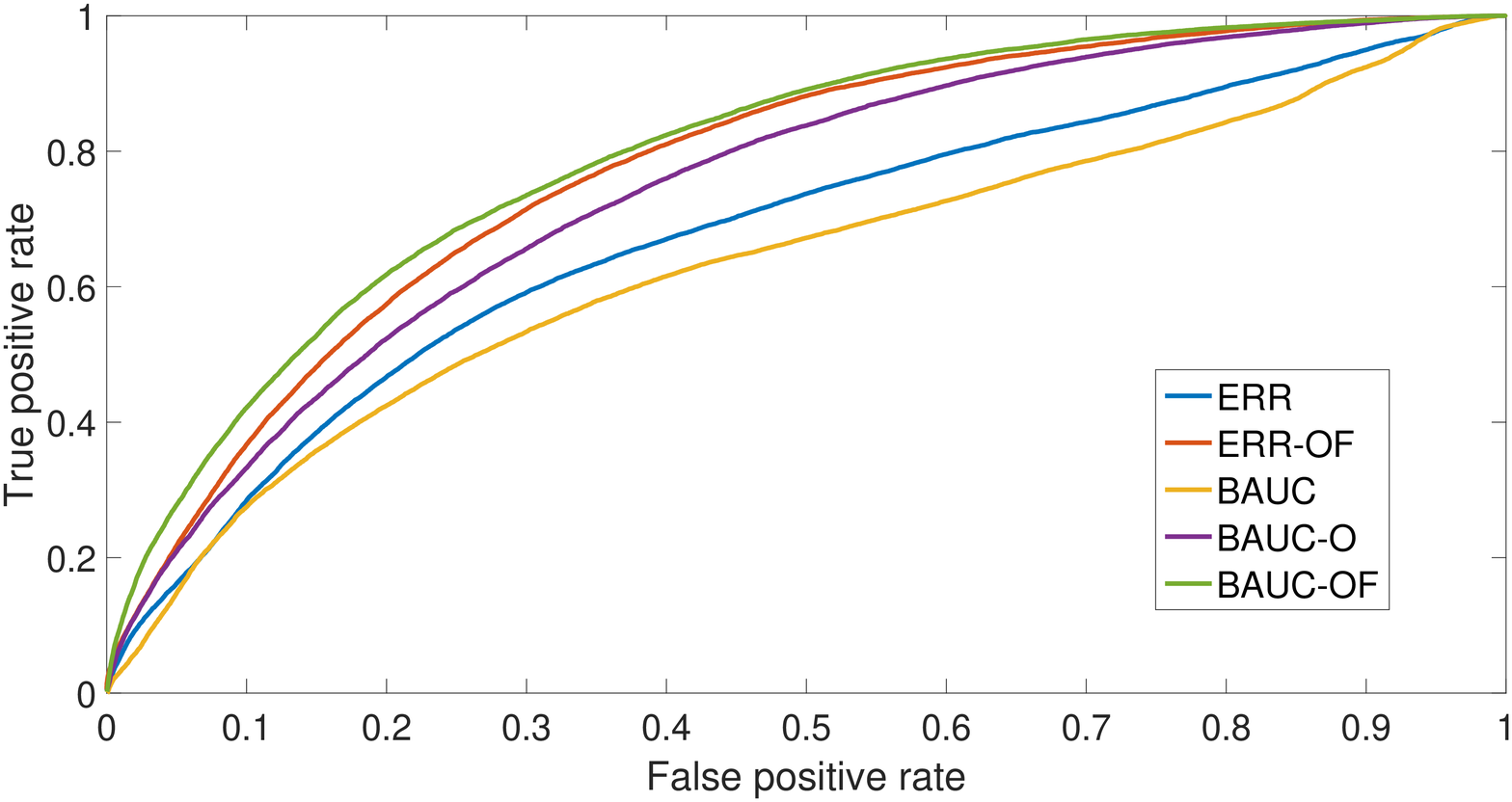}
\caption{ROC curves of the algorithms for EEG data.}
\label{fig:EEG}
\end{center}
\vspace{-6mm}
%\vspace{.3in}
\end{figure}
 We construct the training dataset with 140 labeled positive signals and 160 unlabeled signals. The remaining forms the testing set. By employing cross-validation in a wide range of choices on the tuning parameters, we gather performances of the competing algorithm as shown in Figure~\ref{fig:EEG}.{ Finally, $5$ outliers and $80$ features are selected. And most features coming from FFT are excluded. This shows the feature selection actually works because only signals of some specific frequencies matter in this case.} It is apparent that the proposed $\BAUC$-OF outperforms other algorithms. 

\section{Conclusion}
\label{conclusion} 
Learning robust classifiers from positive and unlabeled data (PU problem) is a very challenging problem.  
In this paper, we propose a robust formulation to systematically address the challenging issues for PU problem. We unify AUC maximization, outlier detection, and feature selection in an integrated formulation, and study its theoretical performance that reveals insights about the relationships between the generalized error bounds with some important guidelines for practice. Extensive numerical studies using both synthetic data and real-world data demonstrate the superiority and efficacy of the proposed method compared with other state-of-the-art methods.

\newpage
\bibliographystyle{abbrvnat}
\bibliography{reference}

\begin{thebibliography}{61}
\providecommand{\natexlab}[1]{#1}
\providecommand{\url}[1]{\texttt{#1}}
\expandafter\ifx\csname urlstyle\endcsname\relax
  \providecommand{\doi}[1]{doi: #1}\else
  \providecommand{\doi}{doi: \begingroup \urlstyle{rm}\Url}\fi

\bibitem[Aggarwal and Yu(2001)]{aggarwal2001outlier}
C.~C. Aggarwal and P.~S. Yu.
\newblock Outlier detection for high dimensional data.
\newblock \emph{SIGMOD}, 2001.

\bibitem[Blanchard et~al.(2010)Blanchard, Lee, and Scott]{blanchard2010semi}
G.~Blanchard, G.~Lee, and C.~Scott.
\newblock Semi-supervised novelty detection.
\newblock \emph{The Journal of Machine Learning Research}, 2010.

\bibitem[Brefeld and Scheffer(2005)]{brefeld2005auc}
U.~Brefeld and T.~Scheffer.
\newblock Auc maximizing support vector learning.
\newblock In \emph{ICML workshop}, 2005.

\bibitem[Breunig et~al.(2000)Breunig, Kriegel, Ng, and Sander]{breunig2000lof}
M.~M. Breunig, H.-P. Kriegel, R.~T. Ng, and J.~Sander.
\newblock Lof: identifying density-based local outliers.
\newblock In \emph{ACM sigmod record}, 2000.

\bibitem[Calvo et~al.(2007)Calvo, Larrañaga, and Lozano]{geo2007a}
B.~Calvo, P.~Larrañaga, and J.~A. Lozano.
\newblock Learning bayesian classifiers from positive and unlabeled examples.
\newblock \emph{Pattern Recognition Letters}, 2007.

\bibitem[Campbell and Allen(2015)]{campbell2015within}
F.~Campbell and G.~I. Allen.
\newblock Within group variable selection through the exclusive lasso.
\newblock In \emph{arXiv:1505.07517}, 2015.

\bibitem[Chapelle and Keerthi(2008)]{chapelle2008multi}
O.~Chapelle and S.~S. Keerthi.
\newblock Multi-class feature selection with support vector machines.
\newblock In \emph{Proceedings of the American statistical association}, 2008.

\bibitem[Cortes and Mohri(2004)]{cortes2004auc}
C.~Cortes and M.~Mohri.
\newblock Auc optimization vs. error rate minimization.
\newblock \emph{NIPS}, 2004.

\bibitem[De~Bie et~al.(2007)De~Bie, Tranchevent, Van~Oeffelen, and
  Moreau]{de2007kernel}
T.~De~Bie, L.-C. Tranchevent, L.~M. Van~Oeffelen, and Y.~Moreau.
\newblock Kernel-based data fusion for gene prioritization.
\newblock \emph{Bioinformatics}, 2007.

\bibitem[Du~Plessis and Sugiyama(2014)]{du2014class}
M.~C. Du~Plessis and M.~Sugiyama.
\newblock Class prior estimation from positive and unlabeled data.
\newblock \emph{IEICE TRANSACTIONS on Information and Systems}, 2014.

\bibitem[du~Plessis et~al.(2014)du~Plessis, Niu, and Sugiyama]{du2014analysis}
M.~C. du~Plessis, G.~Niu, and M.~Sugiyama.
\newblock Analysis of learning from positive and unlabeled data.
\newblock In \emph{NIPS}, 2014.

\bibitem[Elkan(2001)]{elkan2001foundations}
C.~Elkan.
\newblock The foundations of cost-sensitive learning.
\newblock \emph{IJCAI}, 2001.

\bibitem[Elkan and Noto(2008)]{elkan2008learning}
C.~Elkan and K.~Noto.
\newblock Learning classifiers from only positive and unlabeled data.
\newblock In \emph{SIGKDD}, 2008.

\bibitem[Friedman et~al.(2008)Friedman, Hastie, and
  Tibshirani]{friedman2008sparse}
J.~Friedman, T.~Hastie, and R.~Tibshirani.
\newblock Sparse inverse covariance estimation with the graphical lasso.
\newblock In \emph{Biostatistics}, 2008.

\bibitem[Greene et~al.(2008)Greene, Faul, Marnane, Lightbody, Korotchikova, and
  Boylan]{greene2008comparison}
B.~Greene, S.~Faul, W.~Marnane, G.~Lightbody, I.~Korotchikova, and G.~Boylan.
\newblock A comparison of quantitative eeg features for neonatal seizure
  detection.
\newblock \emph{Clinical Neurophysiology}, 2008.

\bibitem[Hanley and McNeil(1982)]{rad1982the}
J.~A. Hanley and B.~J. McNeil.
\newblock The meaning and use of the area under a receiver operating
  characteristic (roc) curve.
\newblock \emph{Radiology}, 1982.

\bibitem[He et~al.(2006)He, Cai, and Niyogi]{he2006laplacian}
X.~He, D.~Cai, and P.~Niyogi.
\newblock Laplacian score for feature selection.
\newblock In \emph{NIPS}, 2006.

\bibitem[Hodge and Austin(2004)]{hodge2004survey}
V.~Hodge and J.~Austin.
\newblock A survey of outlier detection methodologies.
\newblock \emph{Artificial intelligence review}, 2004.

\bibitem[Hoi et~al.(2004)Hoi, Chan, Huang, Lyu, and King]{hoi2004biased}
C.-H. Hoi, C.-H. Chan, K.~Huang, M.~R. Lyu, and I.~King.
\newblock Biased support vector machine for relevance feedback in image
  retrieval.
\newblock In \emph{IJCNN}, 2004.

\bibitem[Huang and Zhang(2010)]{huang2010the}
J.~Huang and T.~Zhang.
\newblock The benefit of group sparsity.
\newblock In \emph{The Annals of Statistics}, 2010.

\bibitem[Hubert and Branden(2003)]{hubert2003robust}
M.~Hubert and K.~V. Branden.
\newblock Robust methods for partial least squares regression.
\newblock \emph{Journal of Chemometrics}, 2003.

\bibitem[Jiang et~al.(2001)Jiang, Tseng, and Su]{jiang2001two}
M.-F. Jiang, S.-S. Tseng, and C.-M. Su.
\newblock Two-phase clustering process for outliers detection.
\newblock \emph{Pattern recognition letters}, 2001.

\bibitem[Jordaan and Smits(2004)]{jordaan2004robust}
E.~M. Jordaan and G.~F. Smits.
\newblock Robust outlier detection using svm regression.
\newblock \emph{IJCNN}, 2004.

\bibitem[Kaur and Wasan(2006)]{kaur2006empirical}
H.~Kaur and S.~K. Wasan.
\newblock Empirical study on applications of data mining techniques in
  healthcare.
\newblock In \emph{Journal of Computer Science}, 2006.

\bibitem[Knorr and Ng(1999)]{knorr1999finding}
E.~M. Knorr and R.~T. Ng.
\newblock Finding intensional knowledge of distance-based outliers.
\newblock In \emph{VLDB}, 1999.

\bibitem[Lee and Liu(2003)]{icml2003leaning}
W.~S. Lee and B.~Liu.
\newblock Learning with positive and unlabeled examples using weighted logistic
  regression.
\newblock In \emph{ICML}, 2003.

\bibitem[Li et~al.(2011)Li, Guo, and Elkan]{geo2011a}
W.~Li, Q.~Guo, and C.~Elkan.
\newblock A positive and unlabeled learning algorithm for one-class
  classification of remote-sensing data.
\newblock \emph{IEEE Transactions on Geoscience and Remote Sensing}, 2011.

\bibitem[Li and Liu(2003)]{li2003learning}
X.~Li and B.~Liu.
\newblock Learning to classify texts using positive and unlabeled data.
\newblock In \emph{IJCAI}, 2003.

\bibitem[Li et~al.(2009)Li, Philip, and Liu]{sdm2009positive}
X.~Li, S.~Y. Philip, and B.~Liu.
\newblock Positive unlabeled learning for data stream classification.
\newblock In \emph{SDM}, 2009.

\bibitem[Lichman(2013)]{Lichman:2013}
M.~Lichman.
\newblock {UCI} machine learning repository, 2013.
\newblock URL \url{http://archive.ics.uci.edu/ml}.

\bibitem[Liu et~al.(2002)Liu, Lee, Yu, and Li]{icml2002partially}
B.~Liu, W.~S. Lee, P.~S. Yu, and X.~L. Li.
\newblock Partially supervised classification of text documents.
\newblock In \emph{ICML}, 2002.

\bibitem[Liu et~al.(2013)Liu, Fujimaki, and Ye]{liu2013forward}
J.~Liu, R.~Fujimaki, and J.~Ye.
\newblock Forward-backward greedy algorithms for general convex smooth
  functions over a cardinality constraint.
\newblock \emph{ICML}, 2013.

\bibitem[Manevitz and Yousef(2001)]{manevitz2001one}
L.~M. Manevitz and M.~Yousef.
\newblock One-class svms for document classification.
\newblock \emph{Journal of machine Learning research}, 2001.

\bibitem[Mason and Graham(2002)]{qua2002areas}
S.~J. Mason and N.~E. Graham.
\newblock Areas beneath the relative operating characteristics (roc) and
  relative operating levels (rol) curves: Statistical significance and
  interpretation.
\newblock \emph{Quarterly Journal of the Royal Meteorological Society}, 2002.

\bibitem[Mordelet and Vert(2014)]{mordelet2014bagging}
F.~Mordelet and J.-P. Vert.
\newblock A bagging svm to learn from positive and unlabeled examples.
\newblock \emph{Pattern Recognition Letters}, 2014.

\bibitem[Moya and Hush(1996)]{moya1996network}
M.~M. Moya and D.~R. Hush.
\newblock Network constraints and multi-objective optimization for one-class
  classification.
\newblock \emph{Neural Networks}, 1996.

\bibitem[Ng(2004)]{ng2004feature}
A.~Y. Ng.
\newblock Feature selection, l 1 vs. l 2 regularization, and rotational
  invariance.
\newblock In \emph{ICML}, 2004.

\bibitem[Nguyen et~al.(2011)Nguyen, Li, and Ng]{ijcai2011positive}
M.~N. Nguyen, X.~L. Li, and S.~K. Ng.
\newblock Positive unlabeled leaning for time series classification.
\newblock In \emph{IJCAI}, 2011.

\bibitem[Nguyen et~al.(2014)Nguyen, Needell, and Woolf]{nguyen2014linear}
N.~Nguyen, D.~Needell, and T.~Woolf.
\newblock Linear convergence of stochastic iterative greedy algorithms with
  sparse constraints.
\newblock \emph{arXiv:1407.0088}, 2014.

\bibitem[Obozinski et~al.(2006)Obozinski, Taskar, and
  Jordan]{obozinski2006multi}
G.~Obozinski, B.~Taskar, and M.~Jordan.
\newblock Multi-task feature selection.
\newblock \emph{Statistics Department, UC Berkeley, Tech. Rep}, 2006.

\bibitem[Pelckmans and Suykens(2009)]{pelckmans2009transductively}
K.~Pelckmans and J.~A. Suykens.
\newblock Transductively learning from positive examples only.
\newblock In \emph{ESANN}, 2009.

\bibitem[Peng et~al.(2009)Peng, Lin, Sun, and Tsai]{peng2009healthcare}
Y.~T. Peng, C.~Y. Lin, M.~T. Sun, and K.~C. Tsai.
\newblock Healthcare audio event classification using hidden markov models and
  hierarchical hidden markov models.
\newblock In \emph{ICME}, 2009.

\bibitem[Plessis et~al.(2015)Plessis, Niu, and Sugiyama]{plessis2015convex}
M.~D. Plessis, G.~Niu, and M.~Sugiyama.
\newblock Convex formulation for learning from positive and unlabeled data.
\newblock In \emph{ICML}, 2015.

\bibitem[Rakotomamonjy(2004)]{rakotomamonjy2004optimizing}
A.~Rakotomamonjy.
\newblock Optimizing area under roc curve with svms.
\newblock In \emph{ROCAI}, 2004.

\bibitem[Scott and Blanchard(2009)]{scott2009novelty}
C.~Scott and G.~Blanchard.
\newblock Novelty detection: Unlabeled data definitely help.
\newblock In \emph{AISTATS}, 2009.

\bibitem[Smola et~al.(2009)Smola, Song, and Teo]{smola2009relative}
A.~Smola, L.~Song, and C.~H. Teo.
\newblock Relative novelty detection.
\newblock In \emph{AISTATS}, 2009.

\bibitem[Tibshirani(1996)]{tibshirani1996regression}
R.~Tibshirani.
\newblock Regression shrinkage and selection via the lasso.
\newblock \emph{Journal of the Royal Statistical Society. Series B
  (Methodological)}, 1996.

\bibitem[Tropp(2004)]{tropp2004greed}
J.~A. Tropp.
\newblock Greed is good: Algorithmic results for sparse approximation.
\newblock \emph{Information Theory, IEEE Transactions on}, 2004.

\bibitem[Vapnik(2006)]{vapnik2006estimation}
V.~Vapnik.
\newblock \emph{Estimation of dependences based on empirical data}.
\newblock Springer Science \& Business Media, 2006.

\bibitem[Walczak(1995)]{walczak1995outlier}
B.~Walczak.
\newblock Outlier detection in multivariate calibration.
\newblock \emph{Chemometrics and intelligent laboratory systems}, 1995.

\bibitem[Xu et~al.(2017)Xu, Nie, and Han]{xu2017feature}
J.~Xu, F.~Nie, and J.~Han.
\newblock Feature selection via scaling factor integrated multi-class support
  vector machines.
\newblock In \emph{IJCAI}, 2017.

\bibitem[Yamanishi and Takeuchi(2001)]{yamanishi2001discovering}
K.~Yamanishi and J.-i. Takeuchi.
\newblock Discovering outlier filtering rules from unlabeled data: combining a
  supervised learner with an unsupervised learner.
\newblock In \emph{SIGKDD}, 2001.

\bibitem[Yamanishi et~al.(2004)Yamanishi, Takeuchi, Williams, and
  Milne]{yamanishi2004line}
K.~Yamanishi, J.-I. Takeuchi, G.~Williams, and P.~Milne.
\newblock On-line unsupervised outlier detection using finite mixtures with
  discounting learning algorithms.
\newblock \emph{Data Mining and Knowledge Discovery}, 2004.

\bibitem[Yang et~al.(2012)Yang, Li, Mei, Kwoh, and Ng]{bioinf2012positive}
P.~Yang, X.~L. Li, J.~P. Mei, C.~K. Kwoh, and S.~k. Ng.
\newblock Positive-unlabeled learning for disease gene identification.
\newblock \emph{Bioinformatics}, 2012.

\bibitem[Yao et~al.(2017)Yao, Liu, Jiang, Han, and Han]{yao2017lle}
C.~Yao, Y.-F. Liu, B.~Jiang, J.~Han, and J.~Han.
\newblock Lle score: a new filter-based unsupervised feature selection method
  based on nonlinear manifold embedding and its application to image
  recognition.
\newblock \emph{IEEE Transactions on Image Processing}, 2017.

\bibitem[Yu et~al.(2002)Yu, Han, and Chang]{kdd2002PEBL}
H.~Yu, J.~Han, and K.~C. Chang.
\newblock Pebl: positive example based learning for web page classification
  using svm.
\newblock In \emph{SIGKDD}, 2002.

\bibitem[Yuan et~al.(2014)Yuan, Li, and Zhang]{yuan2013gradient}
X.-T. Yuan, P.~Li, and T.~Zhang.
\newblock Gradient hard thresholding pursuit for sparsity-constrained
  optimization.
\newblock \emph{ICML}, 2014.

\bibitem[Zhang et~al.(2010)Zhang, Huang, Huang, Yu, Li, and
  Metaxas]{zhang2010automatic}
S.~Zhang, J.~Huang, Y.~Huang, Y.~Yu, H.~Li, and D.~N. Metaxas.
\newblock Automatic image annotation using group sparsity.
\newblock \emph{CVPR}, 2010.

\bibitem[Zhang(2009)]{zhang2009adaptive}
T.~Zhang.
\newblock Adaptive forward-backward greedy algorithm for sparse learning with
  linear models.
\newblock In \emph{NIPS}, 2009.

\bibitem[Zhang(2011)]{zhang2011sparse}
T.~Zhang.
\newblock Sparse recovery with orthogonal matching pursuit under rip.
\newblock \emph{Information Theory, IEEE Transactions on}, 2011.

\bibitem[Zou(2006)]{zou2006the}
H.~Zou.
\newblock The adaptive lasso and its oracle properties.
\newblock In \emph{Journal of the American statistical association}, 2006.

\end{thebibliography}

\end{document}